\newcommand{\beq}{\begin{equation}}
\newcommand{\eeq}{\end{equation}}
\newcommand{\beqa}{\begin{eqnarray}}
\newcommand{\eeqa}{\end{eqnarray}}
\newcommand{\beqan}{\begin{eqnarray*}}
	\newcommand{\eeqan}{\end{eqnarray*}}
\newcommand{\beqannumb}{\begin{eqnarray}}
\newcommand{\eeqannumb}{\end{eqnarray}}
\long\def\acks#1{\vskip 0.3in\noindent{\large\bf Acknowledgments}\vskip 0.2in
\noindent #1}
\newcommand{\BlackBox}{\rule{1.5ex}{1.5ex}}  
\newenvironment{proof}{\par\noindent{\bf Proof\ }}{\hfill\BlackBox\\}
\newtheorem{theorem}{Theorem}
\newtheorem{lemma}[theorem]{Lemma} 
\newtheorem{proposition}[theorem]{Proposition} 
\newtheorem{remark}[theorem]{Remark}
\newtheorem{corollary}[theorem]{Corollary}
\newtheorem{definition}[theorem]{Definition}
\title{Indexed Minimum Empirical Divergence for Unimodal Bandits}
\author{
  Hassan Saber
    \\
  Université de Lille, Inria, CNRS, Centrale Lille
  \\
  UMR 9189 – CRIStAL, F-59000 Lille, France \\
  \texttt{hassan.saber@inria.fr} \\
  \And
  Pierre M{\'e}nard
  \\
  Otto von Guericke Universit{\"a}t Magdeburg\\
  \texttt{pierre.menard@ovgu.de}\\
  
  \And 
  
  Odalric-Ambrym Maillard
    \\
  Université de Lille, Inria, CNRS, Centrale Lille
  \\
  UMR 9189 – CRIStAL, F-59000 Lille, France \\
  \texttt{odalric.maillard@inria.fr} \\

}
\begin{document}

\maketitle

\begin{abstract}
  We consider a multi-armed bandit problem specified by a set of one-dimensional family exponential distributions endowed with a unimodal structure. 
	    We introduce \IMEDUB, an algorithm that optimally exploits the unimodal-structure, by adapting to this setting the Indexed Minimum Empirical Divergence (\IMED) algorithm introduced by \cite{honda2015imed}.  Owing to our proof technique, we are able to provide a concise finite-time analysis of the \IMEDUB algorithm. Numerical experiments show that \IMEDUB competes with the state-of-the-art algorithms.
\end{abstract}

	\section{Introduction} \label{sec:intro}
	
	The multi-armed bandit problem is a popular framework to formalize sequential decision making problems.
	It was first introduced in the context of medical trials \citep{thompson1933likelihood,thompson1935criterion} and later formalized by \cite{ro52}:
	A bandit is specified by a configuration, that is a set of unknown probability distributions, $\nu  \!=\! (\nu_a)_{a\in\cA}$ with means $(\mu_{a})_{a\in\cA}$.
	At each time $ t \!\in\!\Nat $, the learner chooses an arm $ a_t \!\in\! \cA $, based only on the past,
	the learner then receives and observes a reward $ X_t $, conditionally independent, sampled according to $ \nu_{a_t} $. The goal of the learner is  to maximize the expected sum of rewards received over time (up to some unknown horizon $T$), or 
	equivalently minimize the \textit{regret} with respect to the algorithm constantly receiving the highest mean reward 
	$$  R(\nu,T) = \Esp_\nu\!\brackets{\sum_{t=1}^T \mu^\star - X_t}  \text{ where }  \mu^\star=\max_{a \in \cA}\mu_{a}\,. $$ 
	Both means and distributions are \textit{unknown}, which makes the problem non trivial, and the learner only knows that $\nu\!\in\!\cD$  where $\cD$ is a given set of bandit configurations.
	This problem received increased attention in the middle of the $20^{\text{th}}$ century, and the seminal paper \cite{lai1985asymptotically} established the first  lower bound on the cumulative regret, showing that designing an algorithm 
	that is optimal uniformly over a given set of configurations $\cD$ comes with a price. 
	The study of the lower performance bounds in multi-armed bandits successfully lead to the development of asymptotically optimal algorithms for specific configuration sets, such as the \KLUCB algorithm \citep{lai1987adaptive,CaGaMaMuSt2013,maillard2018boundary} for exponential families, or alternatively the \DMED and \IMED algorithms from \cite{honda2011asymptotically,honda2015imed}.
	The lower bounds from \cite{lai1985asymptotically}, later extended by \cite{burnetas1997optimal} did not cover all possible configurations, and in particular \textit{structured} configuration sets were not  handled until \cite{agrawal1989asymptotically} and then \cite{graves1997asymptotically} established generic lower bounds. Here, structure refers to the fact that pulling an arm may reveals information that enables to refine estimation of other arms. 
	Unfortunately, designing  numerical efficient algorithms that are provably optimal remains a challenge for many structures.

	\paragraph{Structured configurations.}
	Motivated by the growing popularity of bandits in a number of industrial and societal application domains, the study of \textit{structured configuration sets} has received increasing attention over the last few years:
	The linear bandit problem is one typical illustration \cite{abbasi2011improved, srinivas2010gaussian, durand2017streaming}, for which the linear structure considerably modifies the achievable lower bound, see \cite{lattimore2017end}. 
	The study of a \textit{unimodal} structure naturally appears in many contexts, e.g.  single-peak preference economics, voting theory
	or wireless communications, and has been first considered in \cite{yu2011unimodal} from a bandit perspective, then in  \cite{combes2014unimodal} and \cite{kaufmann2020uts} providing an explicit lower bound together with an algorithm exploiting this specific structure.
	Other structures  include Lipschitz bandits \cite{magureanu2014oslb}, and we refer to the manuscript \cite{magureanu2018efficient} for other examples, such as cascading bandits that are useful in the context of recommender systems.
	In \cite{combes2017minimal}, a generic algorithm is introduced called \OSSB (Optimal Structured Stochastic Bandit), stepping the path towards generic multi-armed bandit algorithms that are  adaptive to a given structure. More recently in \cite{degenne20b}, the authors introduce an adaptation of the \KLUCB strategy to handle structured multi-armed bandit problems.
	
	\paragraph{Unimodal-structure.}  We assume a \textit{unimodal} structure similar to that considered in \cite{yu2011unimodal} and  \cite{combes2014unimodal}. That is, there exists an undirected graph $G \!=\! (\cA, E)$ whose vertices are arms $\cA$, and whose edges $E$ characterize a partial order  among means $(\mu_a)_{a\in \cA}$. This partial order is assumed unknown to the learner. We assume that there exists a unique optimal arm $ a^\star\!=\!\argmax_{a \in \cA}\mu_a$ and that for  all sub-optimal arm $a\!\neq\! a^\star$, there exists a path $P_a \!=\! (a_1 \!=\! a, \dots, a_{\ell_a} \!=\! a^\star) \!\in\! \cA^{\ell_a} $ of length $\ell_a \!\geq\! 2$ such that for all $ i\!\in\! [1,\ell_a -1]$, $(a_i, a_{i+1}) \in E$ and $\mu_{a_i} < \mu_{a_{i+1}}$. Lastly, we assume that $\nu \!\subset\!\cP\!\coloneqq\!\Set{p(\mu), \mu\!\in\!\Theta}$, where  $p(\mu)$ is an exponential-family distribution probability with density $f(\cdot, \mu)$ with respect to some positive measure $\lambda$ on $\Real$ and mean $\mu \!\in\!\Theta \!\subset\!\Real$. $\cP$ is assumed to be known to the learner. Thus, for all $a \!\in\! \cA$ we have $\nu_a \!=\! p(\mu_a)$. 
	We denote by $\cD_{(\cP,G)}$ or simply $\cD$ the structured set of such unimodal-bandit distributions characterized by $\left(\cP,G\right)$. In the following, we assume that $\cP$ is a set of one-dimensional exponential family distributions.

	\paragraph{Contributions.} In this paper, we provide novel regret minimization results related to the unimodal structure. We first revisit the Indexed Minimum Empirical Divergence (\IMED) algorithm from \cite{honda2015imed} introduced for unstructured multi-armed bandits, and adapt it to the unimodal-structured setting. We introduce  in Section~\ref{sec:imed_algo} the \IMEDUB algorithm that is limited to the pulling of the current best arm or their no more than $d$ nearest arms at each time step, with $d$ the maximum degree of nodes in $G$.
	Being constructed from \IMED, \IMEDUB does not require any optimization procedure and does not separate exploration from exploitation rounds. \IMEDUB appears to be a \textit{local} algorithm. 
	We prove in Theorem~\ref{th:asymptotic_optimality} that \IMEDUB is asymptotically optimal. Furthermore, this novel algorithm competes with the state-of-the-art algorithms in practice. This is confirmed by numerical illustrations on synthetic data.
	  We believe that the construction of this algorithm together with the proof techniques developed in this paper are of independent interest for the bandit community.

	\paragraph{Notations.} Let $\nu \!\in\! \cD$. Let $\mu^\star \!=\! \max_{a \in \cA }\mu_a $ be the optimal mean  and $a^\star \!=\! \argmax_{a \in \cA}{\mu_a}$ be the optimal arm of $\nu$. We define for an arm $a\!\in\! \cA$ its sub-optimality gap $\Delta_a \!=\! \mu^\star \!-\! \mu_a$.  Considering an horizon $T\!\geq\! 1$, thanks to the chain rule we can rewrite the regret as follows:
	\begin{equation}
	R(\nu,T) = \sum_{a \in \cA} \Delta_a\, \Esp_\nu\big[N_a(T)\big]\,,
	\label{eq:chain_rule}
	\end{equation}
	where $ N_a(t) \!=\! \sum_{s=1}^t \ind_{\Set{a_s = a }} $ is the number of pulls of arm $a$  at time $t$.
	
	\section{Regret lower bound}
	\label{sec:lower_bounds}
	
	In this subsection, we recall for completeness the known lower bound on the regret when we assume a unimodal structure. 	In order to obtain non trivial lower bound we consider
	algorithms that are \textit{consistent} (aka uniformly-good).
	\begin{definition}[Consistent algorithm]\label{def:consistent}
		An algorithm is consistent on $\cD$ if for all configuration $\nu\in \cD$, for all sub-optimal arm $a$, for all $ \alpha >  0$, 
		\[
		\limT\Esp_\nu \!\left[\dfrac{N_a(T)}{T^\alpha}\right] = 0\,. 
		\]
	\end{definition} 
	We can derive from the notion of consistency an asymptotic lower bound on the regret, see \cite{combes2014unimodal}.  
	\begin{proposition}[Lower bounds on the regret]\label{prop:LB_regret}Let us consider a  consistent algorithm. Then, for all configuration $ \nu \!\in\! \cD$, it must be that
		\[
		\liminfT \dfrac{R(\nu,T)}{\log(T)} \geq c(\nu):= \sum_{a \in \cV_{a^\star}} \dfrac{\Delta_a}{\KLof{\mu_a}{\mu^\star}} \,,
		\]
		where  $\KLof{\mu}{\mu'} \!=\!\int_{\Real}\!\log\!\left(f(x,\mu)/f(x,\mu')\right)\!f(x,\mu) \lambda(\mathrm{d}x)$ denotes the Kullback-Leibler divergence between $\nu\!=\!p(\mu)$ and $\nu'\!=\!p(\mu')$, for $\mu,\mu' \!\in\! \Theta$. 
		\label{prop:lower_bound}
	\end{proposition}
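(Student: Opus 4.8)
The plan is to run the classical change-of-measure argument of Lai--Robbins, in the form specialised to structured bandits, using the unimodal structure only through the choice of confusing alternatives. Starting from the decomposition \eqref{eq:chain_rule}, $R(\nu,T) = \sum_{a}\Delta_a\,\Esp_\nu[N_a(T)]$, and since every summand is nonnegative, it suffices to prove that for each arm $a$ in the neighbourhood $\cV_{a^\star}$ of the optimal arm in $G$ one has
\[
\liminfT \frac{\Esp_\nu[N_a(T)]}{\log T} \geq \frac{1}{\KLof{\mu_a}{\mu^\star}}.
\]
Multiplying by $\Delta_a$, summing over $a \in \cV_{a^\star}$, and discarding the remaining nonnegative terms then gives the announced bound.

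Fix a neighbour $a \in \cV_{a^\star}$ and a mean $\mu' > \mu^\star$ with $\mu' \in \Theta$. First I would build a confusing alternative $\nu'$ that agrees with $\nu$ on every arm except $a$, where $\nu_a = p(\mu_a)$ is replaced by $p(\mu')$. The crucial structural point --- and the only place the neighbour assumption is used --- is to check that $\nu' \in \cD$: since $(a,a^\star) \in E$ and now $\mu' > \mu^\star > \mu_b$ for every $b \neq a$, the arm $a$ becomes the unique maximiser, the edge $(a^\star,a)$ supplies an increasing path for $a^\star$, and any path $P_b=(b,\dots,a^\star)$ of $\nu$ extends to an admissible increasing path $(b,\dots,a^\star,a)$ for the other arms. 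Hence $\nu'$ is a legitimate unimodal configuration in which $a$ is optimal, while $\nu$ and $\nu'$ are statistically indistinguishable on all arms but $a$.

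Next I would invoke the divergence decomposition together with the data-processing inequality: for any event $\cE$ measurable with respect to the history up to time $T$, the fact that $\nu,\nu'$ differ only at $a$ gives
\[
\Esp_\nu[N_a(T)]\,\KLof{\mu_a}{\mu'} \;\geq\; \mathrm{kl}\!\big(\Prob_\nu(\cE),\Prob_{\nu'}(\cE)\big),
\]
where $\mathrm{kl}(p,q)$ is the binary relative entropy. Taking $\cE = \Set{N_a(T) \leq T/2}$, consistency under $\nu$ (where $a$ is suboptimal) and Markov's inequality give $\Prob_\nu(\cE)\to 1$; consistency under $\nu'$ (where $a$ is optimal, so $\Esp_{\nu'}[T-N_a(T)]=\sum_{b\neq a}\Esp_{\nu'}[N_b(T)]=o(T^\alpha)$ for all $\alpha>0$) and Markov's inequality give $\Prob_{\nu'}(\cE)\leq 2\Esp_{\nu'}[T-N_a(T)]/T = o(T^{\alpha-1})$, hence $\log(1/\Prob_{\nu'}(\cE))\geq (1-\alpha)\log T - O(1)$. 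Combining this with the standard estimate $\mathrm{kl}(p,q)\geq p\log(1/q)-\log 2$ and with $\Prob_\nu(\cE)\to 1$ yields
\[
\Esp_\nu[N_a(T)]\,\KLof{\mu_a}{\mu'} \;\geq\; (1-o(1))(1-\alpha)\log T - O(1).
\]
Dividing by $\log T$, letting $\alpha \downarrow 0$ and then $\mu' \downarrow \mu^\star$, and using continuity of $\mu'\mapsto\KLof{\mu_a}{\mu'}$ on the exponential family, gives the per-arm bound of the first paragraph.

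The information-theoretic steps are routine; the genuinely structure-specific obstacle is the verification that the single-coordinate perturbation $\nu'$ stays in $\cD$, and this is precisely what restricts the sum to the neighbourhood $\cV_{a^\star}$ rather than to all suboptimal arms: only a neighbour of $a^\star$ can be promoted to optimality by changing one distribution while preserving every required increasing path, so each neighbour contributes its own constraint $\Esp_\nu[N_a(T)]\,\KLof{\mu_a}{\mu^\star} \gtrsim \log T$. A secondary care point is that the $o(T^\alpha)$ estimates from consistency must be applied under \emph{both} $\nu$ and $\nu'$, and the interchange of $\liminf$ with the finite sum over $\cV_{a^\star}$ must be justified; both are handled by the nonnegativity of the summands and by sending $\alpha\to 0$ and $\mu'\to\mu^\star$ only at the very end.
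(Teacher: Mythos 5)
The paper itself does not prove Proposition~\ref{prop:LB_regret}; it recalls the result and defers to \cite{combes2014unimodal}, where it is obtained from the general Graves--Lai lower bound for structured bandits followed by an explicit resolution of the resulting optimization problem. Your direct per-arm change-of-measure argument is a correct and somewhat more self-contained route to the same statement: the divergence decomposition plus data processing applied to the event $\Set{N_a(T)\le T/2}$, consistency invoked under both $\nu$ and the perturbed instance, the limits $\alpha\downarrow 0$ and $\mu'\downarrow\mu^\star$ taken last, and the superadditivity of $\liminf$ over the finite nonnegative sum are all standard and sound, and you correctly identify that the only structure-specific content is the membership $\nu'\in\cD$, which is exactly what confines the sum to $\cV_{a^\star}$. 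The one step you should tighten is that membership check itself: your claim that every increasing path $P_b=(b,\dots,a^\star)$ of $\nu$ extends to $(b,\dots,a^\star,a)$ fails when $P_b$ already passes through $a$ as an internal vertex --- the extended walk then visits $a$ twice and cannot be increasing, since the new mean $\mu'>\mu^\star$ sits in its interior. The fix is immediate: truncate $P_b$ at its first visit to $a$; a prefix of an increasing path is increasing, and only the mean of its new endpoint has been raised, so the truncated path certifies unimodality of $\nu'$ with optimum $a$. You should also record the implicit (but standard) assumption that $\Theta$ contains means strictly above $\mu^\star$, so that the confusing alternative $p(\mu')$ with $\mu'>\mu^\star$ exists; this holds for the open parameter sets of Remark~\ref{lb Bern}.
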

	\begin{remark} The quantity $c(\nu)$ is a fully explicit function of $\nu$ (it does not require solving any optimization problem) for some set of distributions $\nu$ (see Remark~\ref{lb Bern}).
	This useful property no longer holds in general for arbitrary structures. Also, it is noticeable that $c(\nu)$ does not involve all the sub-optimal arms but only the ones in $\cV_{a^\star}$. This indicates that sub-optimal arms outside $\cV_{a^\star}$ are sampled $o(\log(T))$, which contrasts with the unstructured stochastic multi-armed bandits. 	See \cite{combes2014unimodal} for further insights.
	\end{remark}
	\begin{remark} \label{lb Bern}  For Bernoulli distributions, a  possible setting is to assume $\lambda = \delta_0 + \delta_1$ (with $\delta_0, \delta_1$ Dirac measures), $\Theta\!=\!(0,1)$ and for $\mu\!\in\!\Theta$, $f(\cdot,\mu)\!=: x \!\in\!\Set{0,1} \mapsto \mu^x(1-\mu)^{1-x}$. Then for all $\mu,\mu'\!\in\!(0,1)$, $\KLof{\mu}{\mu'}\!=\! \mu\log\!\left(\mu/\mu'\right)+(1\!-\!\mu)\log\!\left((1\!-\!\mu)/(1\!-\!\mu')\right)$. 
	For Gaussian distributions (variance $\sigma^2 \!=\!1$), we assume $\lambda$ to be the Lebesgue measure, $\Theta\!=\!\Real$, and for $\mu \!\in\!\Real$, $f(\cdot,\mu)\!= : x \!\in\!\Real \mapsto (\sqrt{2\pi})^{-1}e^{-(x-\mu)^2\!/2}$. Then for all $\mu,\mu'\!\in\!\Real$,  $ \KLof{\mu}{\mu'}\!=\! (\mu' \!-\! \mu)^2\!/2 $. 
	For Exponential distributions, we assume $\lambda$ to be the Lebesgue measure, $\Theta\!= ]0\,;+\infty[$, and for $\mu \!>\!0$, $f(\cdot,\mu)\!= : x \!>\!0 \mapsto e^{-x/\mu}/\mu$. Then for all $\mu,\mu'\!>\!0$,  $ \KLof{\mu}{\mu'}\!=\! \log\!\left(\mu'/\mu\right)\!+\!\mu/\mu'\!-\!1$.
	\end{remark}
	
	\section{Optimal algorithm for unimodal-structured bandits}
	\label{sec:imed_algo}
	
	We present in this section a novel algorithm that  matches the asymptotic lower bound of Proposition~\ref{prop:LB_regret}. This algorithm is inspired by the Indexed Minimum Empirical Divergence (\IMED) proposed by \cite{honda2011asymptotically}. The general idea behind this algorithm is, following the intuition given by the lower bound, to narrow on the current best arm and its neighbourhood for pulling an arm at a given time step.
	\paragraph{Notations.} The empirical mean of the rewards from the arm $a$ is denoted by $ \muhat_a(t) \!=\!\sum_{s=1}^ t{\ind_{\Set{ a_s = a }} X_s}/N_a(t) $ if $ N_a(t)\!>\! 0  $, $ 0 $ otherwise. We also denote by $\muhat^\star(t) \!=\! \max_{a\in\cA}\muhat_a(t)$ and $\Ahat^\star(t) \!=\! \argmax\limits_{a \in \cA}\muhat_a(t)$ respectively the current best mean and the current set of optimal arms.

	\subsection{The \IMEDUB algorithm.}
	We first pull each arm once. For all arm $a \!\in\! \cA$ and time step $t \!\geq\! 1$ we introduce the \IMED index 
	$$ I_a(t) = N_a(t) \, \KLof{\muhat_a(t)}{\muhat^\star(t)}  + \log\!\left(N_a(t)\right) , $$ with the convention $0\!\times\!\infty \!=\! 0$.
	This index can be seen as a transportation cost for moving a sub-optimal arm to an optimal one plus an exploration term: the logarithm of the number of pulls. When an optimal arm is considered, the transportation cost is null and there is only the exploration part. Note that, as stated in \cite{honda2011asymptotically}, $I_{a}(t)$ is an index in the weaker sense since it cannot be determined only by samples from the arm $a$ but also uses the empirical mean of the current optimal arm. We define \IMEDUB (Indexed Minimum Empirical Divergence for Unimodal Bandits), described in Algorithm~\ref{alg:imedub}, to be the algorithm consisting of pulling an arm  $a_t \!\in\! \Set{\ahat^\star_t}\!\cup\!\cV_{\ahat^\star_t}$ with minimum index at each time step $t$, where is $\ahat^\star_t \!\in\! \argmin_{\ahat^\star \in \Ahat^\star(t) }N_{\ahat^\star}(t)$ is a current best arm. This is a natural algorithm since the lower bound on the regret given in Proposition~\ref{prop:LB_regret} involves only the arms in $\cV_{a^\star}$, the neighbourhood of the arm $a^\star$ of maximal mean.
	\begin{algorithm}[H]
		\caption{\IMEDUB}
		\label{alg:imedub}
		\begin{algorithmic}
		    \STATE Pull each arm once
			\FOR{$ t = \abs{\cA} \dots T-1$}
			\STATE Choose $\ahat^\star_t \in \argmin\limits_{\ahat^\star \in \Ahat^\star(t) }N_{\ahat^\star}(t) $ (chosen arbitrarily)
			\STATE  Pull $a_{t+1} \in \argmin\limits_{a \in \Set{\ahat^\star_t}\cup\cV_{\ahat^\star_t}}I_a(t)$ (chosen arbitrarily)
			\ENDFOR
		\end{algorithmic}
	\end{algorithm}

	\subsection{Asymptotic optimality of \IMEDUB}
	
	In this section, we state the main theoretical result of this paper.
	\begin{theorem}[Upper bounds] \label{th:upper bounds}  Let us consider a set of distributions $ \nu \!\in\! \cD$ and let $a^\star$ its optimal arm. Let $\cV_{a^\star}$  be the  sub-optimal arms in the neighbourhood of $a^\star$. Then under the \IMEDUB 
	algorithm for all $0 \!<\! \epsilon \!<\! \epsilon_\nu $, for all horizon time $ T \!\geq\! 1$, for all $a \!\in\!\cV_{a^\star}$,
		\[
		\Esp_\nu[N_{a}(T)] \leq \dfrac{1 + \alpha_\nu(\epsilon)}{\KLof{\mu_a}{\mu_{a^\star}}} \log(T) + 2d\, C_\epsilon  \sqrt{\log(c_\epsilon T)} + d \!\left(1 + c_{\epsilon_\nu}^{-1}\right) + d(2d+3)  \dfrac{ 2\sigma_{\!\epsilon_\nu}^2\e^{\epsilon_\nu^2/2\sigma_{\!\epsilon}^2}}{\epsilon^2}  +  1 
		\]
		and, for all $a \!\notin\! \Set{a^\star}\!\cup\!\cV_{a^\star} $, 
		\[ \Esp_\nu[N_{a}(T)] \leq 2d\, C_\epsilon  \sqrt{\log(c_\epsilon T)}+ d \!\left(1 + c_{\epsilon_\nu}^{-1}\right) + d(2d+3)  \dfrac{ 2\sigma_{\!\epsilon_\nu}^2\e^{\epsilon_\nu^2/2\sigma_{\!\epsilon}^2}}{\epsilon^2} +  1 \,,
		\]
		where  $d$ is the maximum degree of nodes in $G$, $\epsilon_\nu \!=\!  \min_{a \neq a' }\abs{\mu_a \!-\! \mu_{a'}}\!/2$, \newline $\sigma_{\!\epsilon}^2 \!=\!\max\limits_{a\in\cA} \Set{\mathbb{V}_{_{X\sim p(\mu')}}(X) \!: \mu'  \!\in\! [\mu_a \!-\!\epsilon\,, \mu_a]}$ and  $c_\epsilon, C_\epsilon \!>\! 0$ are the constants involved in Theorem~\ref{thm:boundary_crossing}. $\alpha_\nu(\cdot)$ is a non-negative function depending only on $\nu$ such that $\lim\limits_{\epsilon \to 0}\alpha_\nu(\epsilon)\!=\!0$ (see Section~\ref{imed_unimodal notations} for more details).
	\end{theorem}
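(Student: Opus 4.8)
The plan is to bound, for each arm, the expected number of pulls by decomposing the time steps according to whether the empirical means have concentrated around their true values. First I would introduce, for a confidence level $\epsilon < \epsilon_\nu$, the \emph{good} event on which every arm's empirical mean lies within $\epsilon$ of its true mean. Since $\epsilon_\nu = \min_{a \neq a'}\abs{\mu_a - \mu_{a'}}/2$, on this good event the empirical ordering of the means agrees with the true ordering; in particular the unique true optimum is the empirical arg-max, so $\Ahat^\star(t)=\Set{a^\star}$ and the selected current best arm satisfies $\ahat^\star_t = a^\star$, whence $\muhat^\star(t)=\muhat_{a^\star}(t)$. The complementary \emph{bad} events — where some relevant arm deviates by more than $\epsilon$ — will be handled by summing deviation probabilities over the number of samples; for sub-Gaussian-type tails this yields a geometric-like sum bounded by $2\sigma_{\!\epsilon_\nu}^2\e^{\epsilon_\nu^2/2\sigma_{\!\epsilon}^2}/\epsilon^2$, while the combinatorial prefactors $d$ and $d(2d+3)$ arise from union-bounding over an arm together with its at-most-$d$ neighbours and, when $\ahat^\star_t$ is inspected, over the neighbours of the candidate best arm as well.

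The engine of the argument is the index comparison. Whenever an arm $a$ is pulled at time $t+1$, the selection rule gives $I_a(t) \leq I_b(t)$ for every candidate $b \in \Set{\ahat^\star_t}\cup\cV_{\ahat^\star_t}$. On the good event $\ahat^\star_t = a^\star$, so the optimal arm is itself a candidate and its transport cost vanishes, giving $I_{a^\star}(t)=\log(N_{a^\star}(t))\leq\log(t)$. Hence $N_a(t)\,\KLof{\muhat_a(t)}{\muhat^\star(t)}+\log(N_a(t)) \leq \log(t)$. For a neighbour $a\in\cV_{a^\star}$ this, combined with the concentration $\muhat_a(t)\approx\mu_a$, $\muhat^\star(t)\approx\mu_{a^\star}$ and the continuity of $\KLof{\cdot}{\cdot}$, yields $N_a(t)\leq \frac{1+\alpha_\nu(\epsilon)}{\KLof{\mu_a}{\mu_{a^\star}}}\log(t)$, producing the leading term of the first bound; the function $\alpha_\nu(\epsilon)\to 0$ absorbs the gap between the empirical and true divergences and vanishes as $\epsilon\to 0$.

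For an arm $a \notin \Set{a^\star}\cup\cV_{a^\star}$ the mechanism is different, and this explains the absence of a $\log(T)$ term. On the good event the candidate set is exactly $\Set{a^\star}\cup\cV_{a^\star}$, which does not contain $a$; therefore $a$ can only be pulled on a bad event, i.e. when $\ahat^\star_t\neq a^\star$, and bounding the number of such steps by the same deviation sums leaves only the variance-based constants. The remaining $\sqrt{\log(c_\epsilon T)}$ contributions in both bounds come from the event that the empirical index of $a$ itself under-estimates its true transport cost by too much: this is precisely the boundary-crossing event, and invoking Theorem~\ref{thm:boundary_crossing} converts its cumulative effect into $2d\,C_\epsilon\sqrt{\log(c_\epsilon T)}$, the factor $2d$ again reflecting the union bound over the arm and its neighbours, while the residual $d(1+c_{\epsilon_\nu}^{-1})$ collects the constant offsets produced by the same theorem at level $\epsilon_\nu$.

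I expect the main obstacle to be the far-arm step and, more generally, certifying that the local candidate set contains $a^\star$ often enough. In the unstructured \IMED analysis the optimal arm is always available for comparison, whereas here it enters the candidate set only when $\ahat^\star_t = a^\star$ or $a^\star \in \cV_{\ahat^\star_t}$. Establishing that the unimodal structure — the unique optimum together with the strictly increasing paths $P_a$ — forces the empirical arg-max to coincide with $a^\star$ on the good event, and quantifying the rare excursions where it does not without degrading the sharp constant $1/\KLof{\mu_a}{\mu_{a^\star}}$, is the delicate part; the careful tracking of the $d$-dependent union bounds is what ultimately yields the explicit finite-time constants.
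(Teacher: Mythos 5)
Your overall architecture matches the paper's: index comparisons give the empirical upper bound $N_a(t)\,\KLof{\muhat_a(t)}{\muhat^\star(t)} \leq \log(t)$, reliability of $\muhat_a(t)$ and $\muhat^\star(t)$ on good times converts this into the $(1+\alpha_\nu(\epsilon))\log(t)/\KLof{\mu_a}{\mu^\star}$ bound for neighbours, far arms are pulled only when $\ahat^\star_t \neq a^\star$, and the bad times are charged to concentration and boundary-crossing events. The gap is in the step you yourself flag as delicate: certifying $\ahat^\star_t = a^\star$. You propose a good event on which \emph{every} arm's empirical mean is within $\epsilon$ of its true mean, with the complement controlled by summing deviation probabilities over sample counts. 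This cannot work as stated: an arm pulled $n$ times deviates by $\epsilon$ with probability of order $\e^{-n\epsilon^2/2\sigma^2}$, a constant for small $n$, and arms outside $\cV_{a^\star}$ are pulled only $\cO(\sqrt{\log T})$ times, so the expected number of times "some arm deviates" is not $\cO(1)$. The geometric re-indexing you invoke applies only when the deviating arm is guaranteed at least as many pulls as the arm currently being pulled — this is why the paper's deviation sets $\cE^{\pm}_{a,a'}(\epsilon)$ carry the condition $N_{a'}(t) \leq N_a(t)$, which the algorithm supplies for $a = \ahat^\star_t$ via the empirical lower bound \eqref{eq:lb2}, but for no other arm.

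The missing idea is the paper's Lemma~\ref{lem:subset_relations}: when $\ahat^\star_t \neq a^\star$, unimodality produces a neighbour $a \in \cV_{\ahat^\star_t}$ with $\mu_a > \mu_{\ahat^\star_t}$ whose empirical mean sits below $\mu_a - \epsilon_\nu$; since this arm may have been pulled arbitrarily few times, its deviation cannot be charged to a plain concentration bound. Instead, the algorithm's empirical lower bound \eqref{eq:lb1} turns the event into a self-normalized KL-log deviation, $\log(N_{a_{t+1}}(t)) \leq N_a(t)\,\KLof{\muhat_a(t)}{\mu_a - \epsilon_\nu} + \log(N_a(t))$, which is exactly the boundary-crossing event of Theorem~\ref{thm:boundary_crossing} and is the actual source of the $2d\,C_\epsilon\sqrt{\log(c_\epsilon T)}$ term — not, as you suggest, an under-estimation of the pulled arm's own transport cost. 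Without this conversion, both the identification $\ahat^\star_t = a^\star$ and the far-arm bound collapse; once it is supplied, the rest of your counting argument (which is otherwise the paper's, via the last good time at which each arm is pulled) goes through.
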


	In particular one can note that the arms in the neighbourhood of the optimal one are pulled $\cO\!\left(\log(T)\right)$ times while the other sub-optimal arms are pulled $\cO\!\left(\sqrt{\log(T)}\right)$ of times under \IMEDUB. This is coherent with the lower bound that only involves the neighbourhood of the best arm.
	More precisely, combining Theorem~\ref{th:upper bounds} and the chain rule~\eqref{eq:chain_rule} gives the asymptotic optimality of \IMEDUB with respect to the lower bound of Proposition~\ref{prop:LB_regret}.
	\begin{corollary}[Asymptotic optimality]With the same notations as in Theorem~\ref{th:upper bounds}, then under the \IMEDUB algorithm
		\[
		\limsupT \dfrac{R(\nu,T)}{\log(T)} \leq c(\nu) = \sum\limits_{a \in \cV_{a^\star} } \dfrac{\Delta_a}{\KLof{\mu_a}{\mu_{a^\star}}} \,.
		\]
		\label{th:asymptotic_optimality}
	\end{corollary}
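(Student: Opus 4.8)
The plan is to combine the per-arm upper bounds of Theorem~\ref{th:upper bounds} with the regret decomposition~\eqref{eq:chain_rule}, and then to send the accuracy parameter $\epsilon$ to zero. First I would write the regret through the chain rule as $R(\nu,T) = \sum_{a \in \cA}\Delta_a\,\Esp_\nu[N_a(T)]$ and partition $\cA$ into three groups: the optimal arm $a^\star$, the neighbouring sub-optimal arms $a \in \cV_{a^\star}$, and the remaining sub-optimal arms $a \notin \Set{a^\star}\cup\cV_{a^\star}$. The contribution of $a^\star$ drops out immediately, since $\Delta_{a^\star}=0$.

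Next I would substitute the two bounds from Theorem~\ref{th:upper bounds}. For an arm $a \in \cV_{a^\star}$ the bound is $\tfrac{1+\alpha_\nu(\epsilon)}{\KLof{\mu_a}{\mu_{a^\star}}}\log(T)$ plus a term that is $\cO\!\left(\sqrt{\log(T)}\right)$ together with $\epsilon$-dependent constants, whereas for an arm outside $\Set{a^\star}\cup\cV_{a^\star}$ the entire bound is $\cO\!\left(\sqrt{\log(T)}\right)$ plus constants. Dividing by $\log(T)$ and taking $\limsup_T$, every $\cO\!\left(\sqrt{\log(T)}\right)$ contribution and every additive constant vanishes, leaving
\[
\limsupT \frac{R(\nu,T)}{\log(T)} \leq \left(1+\alpha_\nu(\epsilon)\right)\sum_{a \in \cV_{a^\star}}\frac{\Delta_a}{\KLof{\mu_a}{\mu_{a^\star}}}\,,
\]
a bound valid for every fixed $0 < \epsilon < \epsilon_\nu$.

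Finally, since the left-hand side does not depend on $\epsilon$, I would take the infimum over $\epsilon$ (equivalently, let $\epsilon \to 0$) and invoke the property $\lim_{\epsilon \to 0}\alpha_\nu(\epsilon)=0$ recorded in Theorem~\ref{th:upper bounds}. This yields $\limsupT R(\nu,T)/\log(T) \leq \sum_{a \in \cV_{a^\star}}\Delta_a/\KLof{\mu_a}{\mu_{a^\star}} = c(\nu)$, as claimed.

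The derivation is essentially bookkeeping, so I do not expect a serious obstacle; the one point that requires care is the order of the two limits. One must take $\limsup_T$ \emph{first}, with $\epsilon$ held fixed, so that the sublogarithmic and constant terms are annihilated by the division by $\log(T)$, and only afterwards send $\epsilon \to 0$. Reversing the order would be illegitimate, because the $\epsilon$-dependent constants diverge as $\epsilon \to 0$ (for instance the term carrying the factor $1/\epsilon^2$, as well as $C_\epsilon$ and $c_\epsilon^{-1}$); keeping $T$ as the inner limit is precisely what prevents these blow-ups from interfering with the leading $\log(T)$ coefficient.
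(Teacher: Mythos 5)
Your proposal is correct and follows exactly the route the paper intends: the paper gives no separate proof of the corollary beyond the remark that it follows by combining Theorem~\ref{th:upper bounds} with the chain rule~\eqref{eq:chain_rule}, which is precisely your argument (including the correct order of limits, $T\to\infty$ before $\epsilon\to 0$). Nothing is missing.
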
 
	A finite time analysis of \IMEDUB is provided in following Section~\ref{sec : imed_analysis}.

	\section{\IMEDUB finite time analysis}
	\label{sec : imed_analysis}
	At a high level, the key interesting step of the proof is to realize that the considered algorithm implies empirical lower and empirical upper bounds on the numbers of pulls (see Lemma~\ref{lem:unimodal empirical lower bounds}, Lemma~\ref{unimodal empirical upper bounds}). Then, based on concentration lemmas (see Section~\ref{app: concentration_lemmas}), the algorithm-based empirical lower bounds ensure the reliability of the estimators of interest (Lemma~\ref{lem:reliable_means}). Interestingly, this makes use of arguments based on recent concentration of measure that enable to control the concentration without adding some $\log\log$ bonus (such a bonus was required for example in the initial analysis of the KL-UCB strategy from \cite{CaGaMaMuSt2013}). 
	Then, combining the reliability of these estimators with the obtained algorithm-base empirical upper bounds, we obtain upper bounds on the average numbers of pulls (Theorem~\ref{th:upper bounds}). The proof is concise to fit mostly in the next few pages.

	\subsection{\label{imed_unimodal notations} Notations}
	Let us consider $\nu \!\in\! \cD$ and let us denote by  $a^\star$ its best arm. We recall that for all $a \!\in\! \cA $,  $\cV_{a} \!=\! \Set{a' \in \cA:\ (a,a') \in E}$ is the neighbourhood of arm $a$ in graph $G\!=\!(\cA,E)$, and that 
	\beq \label{eq:d&epsilon_nu}
	d = \max\limits_{a \in\cA}\abs{\cV_a},\ \epsilon_\nu =  \min\limits_{a  \neq a'}\dfrac{\abs{\mu_a - \mu_{a'}}}{2} \,.
	\eeq
	Then, there exists a function $\alpha_\nu(\cdot)$ such that  for all $0 \!<\!\epsilon\!<\!\epsilon_\nu$, for all $a \neq a^\star $,
	\beq \label{eq:alpha_nu}
	  \KLof{\mu_a \!+\! \epsilon}{ \mu^\star \!-\! \epsilon} \leq \!\left(1 \!+\! \alpha_\nu(\epsilon)\right)^{-1} \KLof{\mu_a}{ \mu^\star}
	\eeq
	and $\lim\limits_{\epsilon\downarrow0}\downarrow\alpha_\nu(\epsilon) = 0$.
	At each time step $t \!\geq\! 1$, $\ahat^\star_t$ is arbitrarily chosen in $\argmin\limits_{a \in \Ahat^\star(t)}N_a(t)$ where $\Ahat^\star(t)\!=\!\argmax\limits_{a \in \cA}\muhat_a(t)$.  
	
	\subsection{Algorithm-based empirical bounds} 
	The \IMEDUB algorithm implies inequalities between the indexes that can be rewritten as inequalities on the numbers of pulls. While  lower bounds involving $\log(t)$ may be expected in view of the asymptotic regret bounds, we show lower bounds on the numbers of pulls involving instead $\log\!\left(N_{a_{t+1}}(t)\right)$, the logarithm of the number of pulls of the current chosen arm. We also provide upper bounds on $N_{a_{t+1}}(t)$ involving $\log(t)$.

	We believe that establishing these empirical lower and upper bounds is a key element of our proof technique, that is of independent interest and not \textit{a priori} restricted to the unimodal structure.
	\begin{lemma}[Empirical lower bounds]\label{lem:unimodal empirical lower bounds}Under \IMEDUB, at each step time $t \!\geq\! \abs{\cA}$, for all $a \!\in\!\cV_{\ahat_t^\star}$,
	\beq \label{eq:lb1}
	\log\!\left(N_{a_{t+1}}(t)\right)  \leq N_{a}(t)\, \KLof{\muhat_{a}(t)}{\muhat^\star(t)} + \log\!\left(N_{a}(t)\right)
	\eeq
	and
	\beq  \label{eq:lb2} 
	N_{a_{t+1}}(t) \leq N_{\ahat^\star_t}(t)\,.
	\eeq
	\end{lemma}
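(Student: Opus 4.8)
The plan is to exploit the defining property of the \IMEDUB selection rule: at time step $t$, the arm $a_{t+1}$ is chosen to \emph{minimize} the \IMED index over the candidate set $\Set{\ahat^\star_t}\!\cup\!\cV_{\ahat^\star_t}$. Since every $a\!\in\!\cV_{\ahat^\star_t}$ belongs to this candidate set, minimality gives $I_{a_{t+1}}(t)\!\leq\! I_{a}(t)$ for all such $a$. Writing out both sides using the index definition $I_a(t)\!=\!N_a(t)\KLof{\muhat_a(t)}{\muhat^\star(t)}+\log(N_a(t))$ yields the master inequality
\[
N_{a_{t+1}}(t)\,\KLof{\muhat_{a_{t+1}}(t)}{\muhat^\star(t)}+\log\!\left(N_{a_{t+1}}(t)\right)\ \leq\ N_{a}(t)\,\KLof{\muhat_{a}(t)}{\muhat^\star(t)}+\log\!\left(N_{a}(t)\right).
\]
The entire proof reduces to controlling the left-hand side from below by $\log(N_{a_{t+1}}(t))$.

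For \eqref{eq:lb1}, I would argue that the transportation term $N_{a_{t+1}}(t)\,\KLof{\muhat_{a_{t+1}}(t)}{\muhat^\star(t)}$ is nonnegative, because the Kullback--Leibler divergence is always nonnegative (and the convention $0\!\times\!\infty\!=\!0$ handles the degenerate case where $N_{a_{t+1}}(t)\!=\!0$ cannot occur since each arm is pulled once initially, and where the divergence may be infinite). Dropping this nonnegative term from the left-hand side leaves $\log(N_{a_{t+1}}(t))\!\leq\! I_{a_{t+1}}(t)\!\leq\! I_a(t)$, which is exactly \eqref{eq:lb1}. The key observation here is simply that $\log$ is the ``floor'' of the index, so the index can never drop below the exploration term of the pulled arm.

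For \eqref{eq:lb2}, I would specialize the master inequality to the choice $a\!=\!\ahat^\star_t$, the current best arm, which is itself a candidate. Since $\ahat^\star_t\!\in\!\Ahat^\star(t)\!=\!\argmax_a\muhat_a(t)$, we have $\muhat_{\ahat^\star_t}(t)\!=\!\muhat^\star(t)$, so its transportation cost $\KLof{\muhat_{\ahat^\star_t}(t)}{\muhat^\star(t)}\!=\!\KLof{\muhat^\star(t)}{\muhat^\star(t)}\!=\!0$ vanishes, leaving $I_{\ahat^\star_t}(t)\!=\!\log(N_{\ahat^\star_t}(t))$. Combined with nonnegativity of the pulled arm's transportation term as above, we get $\log(N_{a_{t+1}}(t))\!\leq\! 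I_{a_{t+1}}(t)\!\leq\! I_{\ahat^\star_t}(t)\!=\!\log(N_{\ahat^\star_t}(t))$, and monotonicity of $\log$ gives $N_{a_{t+1}}(t)\!\leq\! N_{\ahat^\star_t}(t)$, establishing \eqref{eq:lb2}.

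I do not anticipate a serious obstacle here, as both bounds follow mechanically from minimality of the index and nonnegativity of the divergence. The only point requiring genuine care is the bookkeeping around the $0\!\times\!\infty$ convention and ensuring $N_a(t)\!\geq\!1$ for all candidate arms, so that $\log(N_a(t))$ and the index are well-defined; this is guaranteed by the initialization step that pulls each arm once before the main loop. If anything is subtle, it is verifying that $\ahat^\star_t$ genuinely realizes $\muhat_{\ahat^\star_t}(t)\!=\!\muhat^\star(t)$ even under the arbitrary tie-breaking in $\argmin_{\ahat^\star\in\Ahat^\star(t)}N_{\ahat^\star}(t)$, but this holds by construction since $\ahat^\star_t$ is drawn from $\Ahat^\star(t)$.
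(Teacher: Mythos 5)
Your proof is correct and follows essentially the same route as the paper's: both use the nonnegativity of the Kullback--Leibler term to lower-bound the index by $\log(N_{a_{t+1}}(t))$, then exploit minimality of the index over the candidate set $\Set{\ahat^\star_t}\cup\cV_{\ahat^\star_t}$, specializing to $a=\ahat^\star_t$ (whose transportation cost vanishes) for \eqref{eq:lb2}. Your additional remarks on the initialization guaranteeing $N_a(t)\geq 1$ and on the tie-breaking are careful but not essential.
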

	
	\begin{proof} For $a \!\in\! \cA$, by definition, we have $I_a(t) \!=\! N_{a}(t) \KLof{\muhat_{a}(t)}{\muhat^\star(t)} \!+\! \log\!\left(N_{a}(t)\right) $, hence
		\[
		\log\!\left(N_a(t)\right) \leq I_a(t) \,.
		\]
		This implies, since the arm with minimum index is pulled, $  \log\!\left(N_{a_{t+1}}(t)\right) \!\leq\! I_{a_{t+1}}(t) \!=\! \min\limits_{a' \in \Set{\ahat^\star_t}\!\cup\!\cV_{\ahat^\star_t}} I_{a'}(t) \!\leq\! I_{\ahat^\star_t}(t) \!=\! \log\!\left(N_{\ahat^\star_t}(t)\right)$. By taking the $\log^{-1}(\cdot)$, the last inequality allows us to conclude.
	\end{proof}
	\begin{lemma}[Empirical upper bounds]\label{unimodal empirical upper bounds}
		Under \IMEDUB at each step time $t \!\geq\! \abs{\cA}$,
		\beq \label{eq:ub}	
		N_{a_{t+1}}(t) \,\KLof{\muhat_{a_{t+1}}(t)}{\muhat^\star(t)} \leq \log(t) \,. 
		\eeq
	\end{lemma}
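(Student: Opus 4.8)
The plan is to read off the bound directly from the minimality of the \IMED index at the pulled arm, using the current best arm $\ahat^\star_t$ as a comparator. First I would observe that the algorithm pulls $a_{t+1}$ with minimum index over the set $\Set{\ahat^\star_t}\cup\cV_{\ahat^\star_t}$, and since $\ahat^\star_t$ itself belongs to this set, minimality gives immediately $I_{a_{t+1}}(t) \leq I_{\ahat^\star_t}(t)$.

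The second step is to evaluate the comparator index $I_{\ahat^\star_t}(t)$ explicitly. By construction $\ahat^\star_t \in \Ahat^\star(t) = \argmax_{a \in \cA}\muhat_a(t)$, so its empirical mean coincides with the current best mean, $\muhat_{\ahat^\star_t}(t) = \muhat^\star(t)$. Hence the transportation term vanishes, $\KLof{\muhat_{\ahat^\star_t}(t)}{\muhat^\star(t)} = \KLof{\muhat^\star(t)}{\muhat^\star(t)} = 0$, and the index collapses to its exploration part, $I_{\ahat^\star_t}(t) = \log\!\left(N_{\ahat^\star_t}(t)\right)$. Since the counts satisfy $\sum_{a\in\cA}N_a(t) = t$, every arm obeys $N_{\ahat^\star_t}(t) \leq t$, so $I_{\ahat^\star_t}(t) \leq \log(t)$. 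Chaining with the first step yields $I_{a_{t+1}}(t) \leq \log(t)$.

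Finally I would expand the pulled arm's index as $I_{a_{t+1}}(t) = N_{a_{t+1}}(t)\,\KLof{\muhat_{a_{t+1}}(t)}{\muhat^\star(t)} + \log\!\left(N_{a_{t+1}}(t)\right)$ and discard the exploration term, which is nonnegative because the initialization pulls each arm once and so $N_{a_{t+1}}(t) \geq 1$ for $t \geq \abs{\cA}$. This leaves exactly $N_{a_{t+1}}(t)\,\KLof{\muhat_{a_{t+1}}(t)}{\muhat^\star(t)} \leq I_{a_{t+1}}(t) \leq \log(t)$, as claimed.

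There is essentially no substantial obstacle here: the statement is a short, direct consequence of index minimality, in the same spirit as the lower bounds of Lemma~\ref{lem:unimodal empirical lower bounds}. The only points that require a moment of care are recognizing that the comparator $\ahat^\star_t$ has a vanishing Kullback--Leibler term (so that its index reduces cleanly to $\log N_{\ahat^\star_t}(t)$) and that the pulled arm's $\log$ term is nonnegative and may therefore be dropped; both rely on the initial round guaranteeing $N_a(t) \geq 1$ for all $a$ once $t \geq \abs{\cA}$.
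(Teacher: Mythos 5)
Your argument is correct and is essentially identical to the paper's proof: both use index minimality to get $I_{a_{t+1}}(t)\leq I_{\ahat^\star_t}(t)$, note that the comparator's index collapses to $\log N_{\ahat^\star_t}(t)\leq\log(t)$, and drop the nonnegative exploration term of the pulled arm. No gaps.
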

	
	\begin{proof} As above, by construction we have
		\[
		I_{a_{t+1}}(t) \leq I_{\ahat^\star_t}(t) \,.  
		\]
		It remains, to conclude, to note that
		\[
		N_{a_{t+1}}(t) \KLof{\muhat_{a_{t+1}}(t)}{\muhat^\star(t)}  
		\leq I_{a_{t+1}}(t)\,, 
		\]
		and
		\[I_{\ahat^\star_t}(t) =  \log(N_{\ahat^\star_t}(t)) \leq \log(t) \,.
		\]
	\end{proof}
	
	\subsection{Non-reliable current means}
	
	For all arms $a, a'\!\in\!\cA$ and for all accuracy $\epsilon>0$, let $\cE^+_{a,a'}(\epsilon)$ be the set of times where the current mean of arm $a$  $\epsilon$-deviates from above while arm $a$ has  more pulls than the current pulled arm $a'$,
\beq \label{eq:E+}
\cE^+_{a,a'}(\epsilon) \coloneqq \Set{t \in \llbracket 1, T\!-\!1\rrbracket:\ a_{t+1} = a',\ N_{a'}(t) \leq N_{a}(t),\ \muhat_a(t)  \geq \mu_a  + \epsilon } .
\eeq

We similarly define 
\beq \label{eq:E-}
\cE^-_{a,a'}(\epsilon) \coloneqq \Set{t \in \llbracket 1, T\!-\!1\rrbracket:\ a_{t+1} = a',\ N_{a'}(t) \leq N_{a}(t),\ \muhat_a(t)  \leq \mu_a   - \epsilon }  .
\eeq 
We also define 
\beq \label{eq:E}
\cE_{a,a'}(\epsilon) = \cE^+_{a,a'}(\epsilon)\cup\cE^-_{a,a'}(\epsilon)  \,.
\eeq 
\begin{definition}[$\KL$-$\log$ deviation]
For $\epsilon \!>\! 0$, the couple of arms $(a,a') \!\in\! \cA^2$ shows $\epsilon^-$\!-$\KL$-$\log$ deviation at time step $t \!\geq\!1$ if the following conditions are satisfied
\[
\begin{array}{cl}
   (1) &   a_{t+1} = a'  \vspace{1mm} \\
    (2) &  \muhat_a(t) \leq \mu_a - \epsilon \vspace{1mm}\\
    (3) & \log\!\left(N_{a'}(t)\right) \leq   N_a(t)\, \KLof{\muhat_a(t)}{\mu_a\!-\!\epsilon} + \log\!\left(N_{a}(t)\right) .
\end{array}
\]
\end{definition}
For all couple of arms $(a,a')\!\in\!\cA^2$ and for all accuracy $\epsilon>0$, let $\cK^-_{a,a'}(\epsilon)$ be the set of times where  couple of arms $(a,a')$ shows  $\epsilon^-$\!-$\KL$-$\log$ deviation, that is
\beq \label{eq:K-}
\cK^-_{a,a'}(\epsilon) \coloneqq \Set{t \in \llbracket 1, T\!-\!1\rrbracket : 
\begin{array}{cl}
   (1) &   a_{t+1} = a'  \vspace{1mm} \\
    (2) &  \muhat_a(t) \leq \mu_a - \epsilon \vspace{1mm}\\
    (3) & \log\!\left(N_{a'}(t)\right) \leq   N_a(t)\, \KLof{\muhat_a(t)}{\mu_a\!-\!\epsilon} + \log\!\left(N_{a}(t)\right) 
\end{array}
} .
\eeq
We note that
\[
 \cE^{-}_{a,a'}(\epsilon) \subset \cK^-_{a,a'}(\epsilon) \,. 
\]
We can now resort to concentration arguments in order to control the size of these sets, which 
yields the following upper bounds. We defer the proof to Appendix~\ref{app:proof_bouded_sets}.
\begin{lemma}[Bounded subsets of times] \label{lem:bounded_sets} For $\epsilon \!>\!0$, for $(a,a')\!\in\!\cA^2$,
\[
\Esp_\nu\!\left[\abs{\cE^+_{a,a'}(\epsilon)}\right],\ \Esp_\nu\!\left[\abs{\cE^-_{a,a'}(\epsilon)}\right] \leq \dfrac{ 2\sigma_{\!\epsilon}^2\e^{\epsilon^2/2\sigma_{\!\epsilon}^2}}{\epsilon^2} 
\]
\vspace{-1mm}
\[
 \Esp_\nu\!\left[\abs{\cK^-_{a,a'}(\epsilon)\!\setminus\!\cE^-_{a,a'}(\epsilon)}\right] \leq 1 + c_\epsilon^{-1} +  2C_\epsilon  \sqrt{\log(c_\epsilon T)} \,,
\]
where $\sigma_{\!\epsilon}^2 \!=\!\max\limits_{a\in\cA} \Set{\mathbb{V}_{_{X\sim p(\mu')}}(X) \!: \mu'  \!\in\! [\mu_a \!-\!\epsilon\,, \mu_a]}$, $c_\epsilon, C_\epsilon \!>\! 0$ are the constants involved in Theorem~\ref{thm:boundary_crossing}.
\end{lemma}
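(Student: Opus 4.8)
The plan is to treat the three quantities separately, reducing each to a concentration statement about the independent samples of arm $a$, and then to sum the resulting tail bounds over the number of pulls.

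For the two symmetric bounds on $\Esp_\nu[|\cE^+_{a,a'}(\epsilon)|]$ and $\Esp_\nu[|\cE^-_{a,a'}(\epsilon)|]$, I would first observe that every time step counted in $\cE^+_{a,a'}(\epsilon)$ is a pull of $a'$ (condition $a_{t+1}=a'$), so these time steps are indexed by strictly increasing values of $N_{a'}(t)$; along this subsequence the running count is at most $N_{a'}(t)$, and the constraint $N_{a'}(t)\le N_a(t)$ guarantees that at each such step arm $a$ has already been sampled at least as many times. Hence the deviation event $\muhat_a(t)\ge \mu_a+\epsilon$ can be charged to the empirical mean of the i.i.d. rewards of $a$ taken at a sample size that grows at least linearly along the counted steps. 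This lets me dominate $|\cE^+_{a,a'}(\epsilon)|$ in expectation by $\sum_{n\ge1}\ind\{\muhat_{a,n}\ge\mu_a+\epsilon\}$, where $\muhat_{a,n}$ denotes the empirical mean of $a$ after $n$ pulls; the argument for $\cE^-_{a,a'}(\epsilon)$ is identical with the deviation reversed. Next I would apply a Chernoff/sub-Gaussian tail bound with the local variance proxy $\sigma_{\!\epsilon}^2$, giving $\Pr_\nu(\muhat_{a,n}\ge\mu_a+\epsilon)\le \e^{-n\epsilon^2/2\sigma_{\!\epsilon}^2}$ (symmetrically for the lower deviation, the interval $[\mu_a-\epsilon,\mu_a]$ being exactly where the variance controlling the deviation lives). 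Summing the geometric series and using $\e^x-1\ge x$ with $x=\epsilon^2/2\sigma_{\!\epsilon}^2$ yields $\sum_{n\ge1}\e^{-nx}\le \frac{1}{1-\e^{-x}}\le \frac{\e^x}{x}=\frac{2\sigma_{\!\epsilon}^2\e^{\epsilon^2/2\sigma_{\!\epsilon}^2}}{\epsilon^2}$, which is exactly the claimed closed form.

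For the last bound, I would use that $\cE^-_{a,a'}(\epsilon)\subset\cK^-_{a,a'}(\epsilon)$, so that the remaining set $\cK^-_{a,a'}(\epsilon)\setminus\cE^-_{a,a'}(\epsilon)$ consists of the time steps satisfying conditions (1)--(3) but violating $N_{a'}(t)\le N_a(t)$, i.e. with $N_{a'}(t)>N_a(t)$. On such steps condition (3) forces $\log(N_{a'}(t))\le N_a(t)\,\KLof{\muhat_a(t)}{\mu_a-\epsilon}+\log(N_a(t))$ together with $\muhat_a(t)\le\mu_a-\epsilon$: this is a genuine boundary-crossing event in which the empirical $\KL$ index of the undersampled arm $a$ exceeds a threshold of order $\log(N_{a'}(t))=\cO(\log T)$. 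I would therefore invoke the boundary-crossing control of Theorem~\ref{thm:boundary_crossing}, which bounds the expected number of sample sizes $n$ of $a$ at which such a crossing occurs; summing this bound over the admissible range of $n$ (whose effective length is governed by the $\log(c_\epsilon T)$ threshold) produces the $1+c_\epsilon^{-1}+2C_\epsilon\sqrt{\log(c_\epsilon T)}$ estimate.

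The main obstacle I anticipate is the adaptivity of the sampling: the sample size $N_a(t)$ at the counted time steps is random and chosen by the algorithm, so the deviation and boundary-crossing events are self-normalized. Turning the time-indexed counts into clean sums over deterministic sample sizes, without paying an extra $\log\log$ factor, is precisely where the refined concentration lemmas of Section~\ref{app: concentration_lemmas} and the boundary-crossing Theorem~\ref{thm:boundary_crossing} must be used. The $\cK^-_{a,a'}(\epsilon)\setminus\cE^-_{a,a'}(\epsilon)$ estimate is the delicate one, since there the threshold grows with $T$ and the self-normalized crossing probability only decays polynomially, which is what ultimately leaves the $\sqrt{\log(c_\epsilon T)}$ residual term.
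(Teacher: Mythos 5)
Your proposal follows essentially the same route as the paper's proof: reindex the counted time steps by the number of pulls of $a'$ via the stopping times $\tau_n=\inf\{t: N_{a'}(t)=n\}$, bound the $\cE^{\pm}$ sets by a per-sample-size deviation probability combined with Pinsker's inequality and the geometric-series estimate $\tfrac{1}{1-\e^{-x}}\le\tfrac{\e^{x}}{x}$, and bound $\cK^-_{a,a'}(\epsilon)\setminus\cE^-_{a,a'}(\epsilon)$ (where indeed $N_a(t)<N_{a'}(t)$) by summing the tail $C_\epsilon/(n\sqrt{\log(c_\epsilon n)})$ of Theorem~\ref{thm:boundary_crossing} over $n\le T$. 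The one point to make rigorous --- which you correctly flag yourself in your last paragraph --- is that at the $n$-th counted step the sample size $N_a(t)$ is random with $N_a(t)\ge n$, so the domination must be by $\Pr_\nu\bigl(\bigcup_{t:\,N_a(t)\ge n}\{\muhat_a(t)\ge\mu_a+\epsilon\}\bigr)$ and one invokes the time-uniform (maximal) concentration of Lemma~\ref{lem:time_uniform_concentration_1} rather than a fixed-$n$ Chernoff bound; since that lemma gives the same exponential rate $\exp(-n\,\KLof{\mu_a+\epsilon}{\mu_a})$, the resulting constants are unchanged.
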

\subsection{Non-reliable current best arm}
For accuracy $\epsilon>0$, let $\cM^\star(\epsilon)$ be the set of times $t \!\geq\!1$ that do not belong to $\cE^+_{\ahat^\star_t,a_{t+1}}(\epsilon)$ and where some of the current best arm $\ahat^\star_t$ differs from  $a^\star$,
\beq \label{eq:T*}
\cM^\star(\epsilon) \coloneqq \Set{t \geq \abs{\cA}: \begin{array}{cl}
   
    (1)  &  t\notin \cE^+_{\ahat^\star_t,a_{t+1}}(\epsilon) \vspace{2mm}\\ 
    (2)  &  \ahat^\star_t \neq a^\star
\end{array}  } \,.
\eeq

\begin{lemma}[Relation between subsets of times] \label{lem:subset_relations} Under \IMEDUB, for all accuracy $0 \!<\!\epsilon \!<\! \epsilon_\nu = \min\limits_{a\neq a'}\abs{\mu_a \!-\! \mu_{a'}}\!/2$, 
\beq \label{eq:TK}
\cM^\star(\epsilon) \subset \bigcup_{a\in\cV_{\ahat^\star_t}}\cK^-_{a,a_{t+1}}(\epsilon_\nu) \,.
\eeq
\end{lemma}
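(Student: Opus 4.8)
The plan is to show that every time $t \in \cM^\star(\epsilon)$ belongs to $\cK^-_{a,a_{t+1}}(\epsilon_\nu)$ for a well-chosen \emph{uphill} neighbour $a$ of $\ahat^\star_t$. First I would unpack the two defining conditions of $\cM^\star(\epsilon)$. The crucial observation is that the empirical lower bound \eqref{eq:lb2} of Lemma~\ref{lem:unimodal empirical lower bounds} always gives $N_{a_{t+1}}(t) \leq N_{\ahat^\star_t}(t)$, so the membership $t \in \cE^+_{\ahat^\star_t,a_{t+1}}(\epsilon)$ collapses to the single one-sided requirement $\muhat_{\ahat^\star_t}(t) \geq \mu_{\ahat^\star_t} + \epsilon$. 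Consequently condition $(1)$ of $\cM^\star(\epsilon)$, namely $t \notin \cE^+_{\ahat^\star_t,a_{t+1}}(\epsilon)$, is equivalent to $\muhat_{\ahat^\star_t}(t) < \mu_{\ahat^\star_t} + \epsilon$; and since $\ahat^\star_t \in \Ahat^\star(t)$ is a current best arm, this reads $\muhat^\star(t) = \muhat_{\ahat^\star_t}(t) < \mu_{\ahat^\star_t} + \epsilon$.

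Next I would bring in the unimodal structure. Condition $(2)$ gives $\ahat^\star_t \neq a^\star$, so the increasing path from $\ahat^\star_t$ to $a^\star$ guaranteed by the structure yields a neighbour $a \in \cV_{\ahat^\star_t}$ with $\mu_a > \mu_{\ahat^\star_t}$; by the definition of $\epsilon_\nu = \min_{a \neq a'}\abs{\mu_a - \mu_{a'}}/2$ this separation is at least $2\epsilon_\nu$, that is $\mu_a \geq \mu_{\ahat^\star_t} + 2\epsilon_\nu$. This structural input is exactly what forces the existence of a downward-deviating arm: combining $\muhat_a(t) \leq \muhat^\star(t) < \mu_{\ahat^\star_t} + \epsilon$ with $\mu_a \geq \mu_{\ahat^\star_t} + 2\epsilon_\nu$ and $\epsilon < \epsilon_\nu$ gives $\muhat_a(t) < \mu_a - \epsilon_\nu$, which is condition $(2)$ of $\cK^-_{a,a_{t+1}}(\epsilon_\nu)$. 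The same chain also yields $\muhat^\star(t) < \mu_a - \epsilon_\nu$, so that both $\muhat^\star(t)$ and $\mu_a - \epsilon_\nu$ lie strictly above $\muhat_a(t)$, with $\muhat^\star(t) < \mu_a - \epsilon_\nu$.

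Finally I would obtain condition $(3)$ by monotonicity of the divergence followed by the empirical lower bound \eqref{eq:lb1}. For a one-dimensional exponential family the map $y \mapsto \KLof{x}{y}$ is nondecreasing on $[x,+\infty)$, so from the ordering $\muhat_a(t) < \muhat^\star(t) < \mu_a - \epsilon_\nu$ I get $\KLof{\muhat_a(t)}{\muhat^\star(t)} \leq \KLof{\muhat_a(t)}{\mu_a - \epsilon_\nu}$. Feeding this into \eqref{eq:lb1}, valid for every neighbour of $\ahat^\star_t$, gives
\[
\log\!\left(N_{a_{t+1}}(t)\right) \leq N_a(t)\,\KLof{\muhat_a(t)}{\muhat^\star(t)} + \log\!\left(N_a(t)\right) \leq N_a(t)\,\KLof{\muhat_a(t)}{\mu_a - \epsilon_\nu} + \log\!\left(N_a(t)\right),
\]
which is condition $(3)$; condition $(1)$ (that $a_{t+1}$ is the pulled arm) is immediate. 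Hence $t \in \cK^-_{a,a_{t+1}}(\epsilon_\nu)$, proving the inclusion \eqref{eq:TK}.

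I expect the main subtlety to be the bookkeeping around $t \notin \cE^+$: recognising that \eqref{eq:lb2} reduces it to a pure upward deviation of $\muhat^\star(t)$ is what makes the argument work, and one must then track the three quantities $\muhat_a(t) < \muhat^\star(t) < \mu_a - \epsilon_\nu$ carefully so that the divergence monotonicity is applied in the correct direction. Everything else is a short chain of inequalities using $\epsilon < \epsilon_\nu$ and the minimum-gap definition of $\epsilon_\nu$.
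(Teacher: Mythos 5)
Your proof is correct and follows essentially the same route as the paper's: use \eqref{eq:lb2} to reduce $t\notin\cE^+_{\ahat^\star_t,a_{t+1}}(\epsilon)$ to the upward-deviation condition on $\muhat_{\ahat^\star_t}(t)$, invoke the unimodal structure to produce an uphill neighbour $a\in\cV_{\ahat^\star_t}$ with $\mu_a\geq\mu_{\ahat^\star_t}+2\epsilon_\nu$, deduce $\muhat_a(t)<\mu_a-\epsilon_\nu$, and then combine KL monotonicity with \eqref{eq:lb1} to land in $\cK^-_{a,a_{t+1}}(\epsilon_\nu)$. The only stylistic difference is that you make the collapse of the $\cE^+$ membership to a one-sided deviation fully explicit, which the paper states more tersely.
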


\begin{proof}  Let us consider $t \!\in\! \cM^\star(\epsilon)$. Since $\ahat^\star_t\!\neq\!a^\star$, there exists $a\!\in\!\cV_{\ahat^\star_t}$ such that 
\beq \label{eq:TK0}
\mu_{a} > \mu_{\ahat^\star} \,.
\eeq
Then, since $\ahat^\star_t\!\in\!\argmax_{a\in\cA}\muhat_a(t)$, we have
\beq \label{eq:TK1}
\muhat_{\ahat^\star}(t) = \muhat^{\star}(t) \geq \muhat_{a}(t) \,. 
\eeq
Since $t \!\in\! \cM^\star(\epsilon)$, $t \!\notin\! \cE^+_{\ahat^\star_t,a_{t+1}}(\epsilon)$. By considering empirical lower bounds \eqref{eq:lb2} and Equation~\eqref{eq:E+}, we have 
\beq \label{eq:TK2}
\mu_{\ahat^\star_t} + \epsilon \geq \muhat_{\ahat^\star_t}(t) \,.
\eeq
By combining Equations~\eqref{eq:TK1}~and~\eqref{eq:TK2}, it comes 
\beq \label{eq:TK3}
\mu_{\ahat^\star_t} + \epsilon \geq \muhat^\star(t) \geq \muhat_{a}(t)\,.
\eeq
Since $\epsilon \!<\! \epsilon_\nu \!\leq\! \abs{\mu_a \!-\!\mu_{\ahat^\star_t}}\!/2$, Equation~\eqref{eq:TK0} and previous Equation~\eqref{eq:TK3} imply
\beq\label{eq:TK4}
\mu_a-\epsilon_\nu >  \muhat_{\ahat^\star_t}(t) \geq \muhat_a(t) \,.
\eeq
Since $a\!\in\!\cV_{\ahat^\star_t}$, empirical lower bounds \eqref{eq:lb1} imply
\beq \label{eq:TK5}
\log\!\left(N_{a_{t+1}}(t)\right)  \leq N_{a}(t)\, \KLof{\muhat_{a}(t)}{\muhat^\star(t)} + \log\!\left(N_{a}(t)\right).
\eeq
The classical monotonic properties of $\KL(\cdot|\cdot)$ and Equation~\eqref{eq:TK4} imply
\beq \label{eq:TK6}
\left\{\begin{array}{ll}
     \muhat_{a}(t) < \mu_a \!-\!\epsilon_\nu  \vspace{1mm}\\
 \KLof{\muhat_{a}(t)}{\muhat^\star(t)} \leq  \KLof{\muhat_{a}(t)}{\mu_a \!-\!\epsilon_\nu}.     
\end{array} \right.
\eeq
Combining Equations~\eqref{eq:TK4}~and~\eqref{eq:TK6}, we get 
\beq \label{eq:TK7}
\left\{\begin{array}{ll}
     \muhat_{a}(t) < \mu_a \!-\!\epsilon_\nu  \vspace{1mm}\\
     \log\!\left(N_{a_{t+1}}(t)\right)  \leq N_{a}(t)\, \KLof{\muhat_{a}(t)}{\mu_a \!-\!\epsilon_\nu} + \log\!\left(N_{a}(t)\right),
\end{array} \right.
\eeq
which means $t\!\in\!\cK^-_{a,a_{t+1}}(\epsilon_\nu)$.
\end{proof}
\subsection{Reliable current means and current best arm}
In this subsection, we characterize subsets of times where both the mean of current pulled arm and the optimal mean are well-estimated. 
	
	Let us consider for $0\!<\!\epsilon\!<\!\epsilon_\nu$, for $a\!\neq\!a^\star$,
	\beq \label{eq:U_a}
	\cU_a(\epsilon) = \Set{t \geq \abs{\cA} :\ a_{t+1} = a} \bigcap \left(\cE^+_{a_{t+1},a_{t+1}}(\epsilon)\cup\cE^-_{\ahat^\star_t,a_{t+1}}(\epsilon)\cup\cE^+_{\ahat^\star_t,a_{t+1}}(\epsilon)\cup\cM^\star(\epsilon)\right).
	\eeq
	Then, Lemma~\ref{lem:subset_relations} implies
	\beq \label{eq:U_a subset}
	\cU_a(\epsilon) \subset  \bigcup_{\substack{a'\in\Set{a}\cup\cV_a \\ a'' \in \cV_{a'} }}\cE^+_{a',a}(\epsilon)\cup\cE^-_{a',a}(\epsilon)\cup\cK^-_{a'',a}(\epsilon_\nu)\,. 
	\eeq
	In particular, from Lemma~\ref{lem:bounded_sets} and previous Equation~\eqref{eq:U_a subset} we have
	\beqa \label{eq:bounded U_a}
	\Esp_\nu\!\left[\cU_a(\epsilon)\right] &\leq& 2 d (d + 1) \dfrac{ 2\sigma_{\!\epsilon}^2\e^{\epsilon^2/2\sigma_{\!\epsilon}^2}}{\epsilon^2}   + d \!\left(\dfrac{ 2\sigma_{\!\epsilon_\nu}^2\e^{\epsilon_\nu^2/2\sigma_{\!\epsilon_\nu}^2}}{\epsilon_\nu^2}  +  1 + c_{\epsilon_\nu}^{-1} +  2C_\epsilon  \sqrt{\log(c_\epsilon T)}\right) \nonumber \vspace{1mm} \\
	&\leq& d(2d+3)  \dfrac{ 2\sigma_{\!\epsilon_\nu}^2\e^{\epsilon_\nu^2/2\sigma_{\!\epsilon}^2}}{\epsilon^2}   + d \!\left(1 + c_{\epsilon_\nu}^{-1} +  2C_\epsilon  \sqrt{\log(c_\epsilon T)}\right),
	\eeqa
	where $d\!=\!\max_{a\in\cA}\abs{\cV_a}$ is the maximum degree of nodes in graph $\cG$.
	\begin{lemma}[Reliable current means] \label{lem:reliable_means} Under \IMEDUB, for all accuracy $0 \!<\!\epsilon \!<\! \epsilon_\nu = \min\limits_{a\neq a'}\abs{\mu_a \!-\! \mu_{a'}}\!/2$, for all sub-optimal arm $a\!\neq\!a^\star$, for all time step $t\!\notin\!\cU_a(\epsilon)$, $ t\!\geq\!\abs{\cA}$,  such that $a_{t+1}\!=\!a$,
	\[
	\left\{ 
\begin{array}{l}
     \ahat^\star_t=a^\star \vspace{1mm} \\
     \muhat^\star(t) \geq \mu^\star -\epsilon \vspace{1mm} \\
     \muhat_a(t) \leq \mu_a + \epsilon \,.
\end{array}
	\right.
	\]
	\end{lemma}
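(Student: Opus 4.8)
The plan is to argue by simply unfolding the hypothesis $t\notin\cU_a(\epsilon)$ and reading off set memberships. Because $t\geq\abs{\cA}$ and $a_{t+1}=a$, membership of $t$ in $\cU_a(\epsilon)$ would require $t$ to lie in at least one of the four sets forming the union in \eqref{eq:U_a}; hence $t\notin\cU_a(\epsilon)$ forces $t$ to avoid each of $\cE^+_{a,a}(\epsilon)$, $\cE^+_{\ahat^\star_t,a}(\epsilon)$, $\cE^-_{\ahat^\star_t,a}(\epsilon)$ and $\cM^\star(\epsilon)$ simultaneously. Each of the three desired conclusions will then be extracted by identifying which defining inequality of one of these sets must be violated.

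First, the third claim $\muhat_a(t)\leq\mu_a+\epsilon$ follows from $t\notin\cE^+_{a,a}(\epsilon)$: in the definition \eqref{eq:E+} the structural conditions $a_{t+1}=a$ and $N_a(t)\leq N_a(t)$ hold trivially, so only the mean condition $\muhat_a(t)\geq\mu_a+\epsilon$ can fail, giving the bound. Second, to obtain $\ahat^\star_t=a^\star$ I would use $t\notin\cM^\star(\epsilon)$: in the definition \eqref{eq:T*} the clauses $t\geq\abs{\cA}$ and $t\notin\cE^+_{\ahat^\star_t,a_{t+1}}(\epsilon)$ both hold (the latter is precisely $t\notin\cE^+_{\ahat^\star_t,a}(\epsilon)$), so the remaining clause $\ahat^\star_t\neq a^\star$ cannot hold, i.e. $\ahat^\star_t=a^\star$. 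Third, for $\muhat^\star(t)\geq\mu^\star-\epsilon$, since $\ahat^\star_t=a^\star$ attains the empirical maximum we have $\muhat^\star(t)=\muhat_{a^\star}(t)$; applying $t\notin\cE^-_{\ahat^\star_t,a}(\epsilon)$ from \eqref{eq:E-} and checking its count condition then forces the mean clause to fail, yielding $\muhat_{a^\star}(t)>\mu_{a^\star}-\epsilon=\mu^\star-\epsilon$.

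The argument is essentially bookkeeping over set memberships and presents no genuinely hard analytic step. The one point that requires care—and the only place an earlier result enters—is verifying the pull-count condition $N_a(t)\leq N_{\ahat^\star_t}(t)$ before invoking the definition of $\cE^-_{\ahat^\star_t,a}(\epsilon)$; this is exactly the algorithm-based empirical lower bound \eqref{eq:lb2} of Lemma~\ref{lem:unimodal empirical lower bounds}. I would also be careful that the indices $\ahat^\star_t$ and $a_{t+1}$ appearing inside these sets are the realized values at time $t$, so that the inclusions and the identity $\muhat^\star(t)=\muhat_{\ahat^\star_t}(t)$ are applied at the correct arms.
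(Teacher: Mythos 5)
Your proof is correct and follows essentially the same route as the paper's: unfold $t\notin\cU_a(\epsilon)$ into non-membership in each of the four constituent sets and read off the three conclusions, using \eqref{eq:lb2} to certify the pull-count clause of $\cE^-_{\ahat^\star_t,a}(\epsilon)$ so that its mean clause must be the one that fails. If anything you are slightly more explicit than the paper, which invokes \eqref{eq:lb2} only implicitly at that step.
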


	\subsection{\label{subsec: proof theorem}Upper bounds on the numbers of pulls of sub-optimal arms}
	 In this subsection, we now combine the different results of the previous subsections to prove Theorem~\ref{th:upper bounds}.
	\begin{proof}[Proof of Theorem~\ref{th:upper bounds}.] For $0\!<\!\epsilon\!<\!\epsilon_\nu$, for $a\!\neq\! a^\star$, let us consider $t\!\notin\!\cU_a(\epsilon)$, $t\!\geq\!\abs{\cA}$, such that $a_{t+1}\!=\!a$. From empirical upper bounds \eqref{eq:ub}, we have 
	\beq \label{eq:proof_th_1}
	N_{a}(t) \,\KLof{\muhat_{a}(t)}{\muhat^\star(t)} \leq \log(t) \,.
	\eeq 
	From Lemma~\ref{lem:reliable_means} and Algorithm~\ref{alg:imedub}, we have $a\!\in\!\cV_{a^\star}$ and $ \muhat_a(t)\!\leq\!\mu_a\!+\!\epsilon\!<\!\mu^\star \!-\!\epsilon\!\leq\!\muhat^\star(t)$. From classical monotonic properties of $\KL(\cdot|\cdot)$ and Equation~\eqref{eq:alpha_nu}, we have  $ \KLof{\muhat_a(t)}{\muhat^\star(t)} \!\geq\! \KLof{\mu_a \!+\!\epsilon}{\mu^\star\!-\!\epsilon}\!\geq\!\left(1\!+\!\alpha_\nu(\epsilon)\right)^{-1}\KLof{\mu_a}{\mu^\star}$. In view of Equation~\eqref{eq:proof_th_1}, this implies
	\beq \label{eq:proof_th_2}
	\forall t \notin \cU_a(\epsilon), t \geq\abs{\cA}, \textnormal{ such that } a_{t+1} = a,\quad \left\{
	\begin{array}{l}
	     a \in \cV_{a^\star}  \vspace{1mm}\\
	     N_{a}(t) \leq \dfrac{\left(1+\alpha_\nu(\epsilon)\right) \log(t)}{\KLof{\mu_a}{\mu^\star}} \,.
	\end{array}
	\right.
	\eeq 
	
	\noindent For all arm $a\!\in\!\cA$,  for all time step $t\!\geq\!\abs{\cA}$, we  denote by 
\beq \label{eq:proof_th_3}
\tau_a(t) = \max\Set{t'\in\llbracket\abs{\cA}\,;t\rrbracket:\ a_{t'+1} = a \quad \textnormal{and} \quad t' \notin \cU_{a}(\epsilon) }
\eeq
the last time step  before time step $t$ that does not belong to $\cU_{a}(\epsilon)$ such that we pull arm $a$.\newline

Then, from Equations~\eqref{eq:proof_th_2}~and~\eqref{eq:proof_th_3} we have
\beqan
\forall a \neq a^\star,\ \forall t \geq 1, \quad  N_a(t) &=& N_a\!\left(\abs{\cA}\right) + \sum\limits_{t'\geq\abs{\cA}}^{t - 1} \ind_{\Set{ a_{t'+1} = a }} \nonumber\\
&\leq & 1  + \sum\limits_{t'\geq1}^{t - 1} \ind_{\Set{ a_{t'+1} = a,\ t'\in \cU_a(\epsilon) }} + \sum\limits_{t'\geq\abs{\cA}}^{t - 1} \ind_{\Set{ a_{t'+1} = a,\ t'\notin \cU_a(\epsilon) }} \nonumber \\
&\leq& 1  + \abs{\cU_a(\epsilon)} + \sum\limits_{t'\geq\abs{\cA}}^{t - 1} \ind_{\Set{ a_{t'+1} = a,\ t'\notin \cU_a(\epsilon) }} \\
&\leq& 1  + \abs{\cU_a(\epsilon)} + \ind_{\Set{a \notin\cV_{a^\star} }}\times 0 +  \ind_{\Set{a \in\cV_{a^\star} }}\times N_a\!\left(\tau_a(t)\right)\\
&\leq& 1  + \abs{\cU_a(\epsilon)} +  \ind_{\Set{a \in\cV_{a^\star} }}\dfrac{\left(1+\alpha_\nu(\epsilon)\right) \log\!\left(\tau_a(t)\right)}{\KLof{\mu_a}{\mu^\star}}\\
&\leq& 1  + \abs{\cU_a(\epsilon)} +  \ind_{\Set{a \in\cV_{a^\star} }}\dfrac{\left(1+\alpha_\nu(\epsilon)\right) \log(t)}{\KLof{\mu_a}{\mu^\star}} \,.
\eeqan
This implies
	\beq
	\forall a\neq a^\star, \forall t \geq 1,\quad N_a(t) \leq \left\{
	\begin{array}{ll}
	     \dfrac{\left(1+\alpha_\nu(\epsilon)\right) \log(t)}{\KLof{\mu_a}{\mu^\star}} + \abs{\cU_a(\epsilon)}+1 & \textnormal{if } a \in \cV_{a^\star} \vspace{2mm}\\
	     \abs{\cU_a(\epsilon)}+1 & \textnormal{if } a \notin \cV_{a^\star} \,.
	\end{array}
	\right.
	\eeq
	From Equation~\eqref{eq:bounded U_a}, averaging these inequalities allows us to conclude.
	\end{proof}

	\section{Numerical experiments}
	In this section, we compare empirically the following 
	algorithms : \OSUB, \UTS \citep{combes2014unimodal, kaufmann2020uts} and \IMEDUB described in Algorithm~\ref{alg:imedub}. We illustrate  how performs the \IMEDUB algorithm under Bernoulli, Gaussian (variance $\sigma^2 \!=\!0.25$) or Exponential distribution assumption. For the experiments we consider a graph $\cG$ with  maximal degree $d=2$ and the unimodal unimodal vectors of means $\mu\!=\!(0.05, 0.10, 0.15, 0.20, 0.25, 0.20, 0.15, 0.10, 0.05)$, and average regrets over $500$ runs for each distribution family. Based on these experiments (Figure~\ref{figure111}), it seems that \IMEDUB competes with \OSUB and \UTS.

	\begin{figure}[H] 

  \centering
  \includegraphics[width=0.8\linewidth]{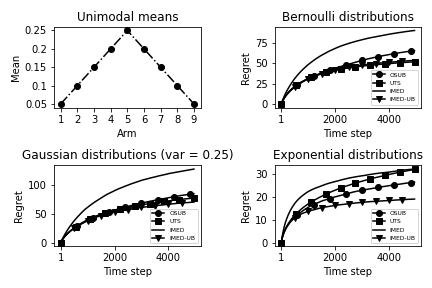}
  \caption{\label{figure111}Cumulative regrets averaged over $500$ runs. }

\end{figure}

	\section*{Conclusion}
	
	In this paper, we have revisited the setup of unimodal multi-armed bandits: We introduced a novel variant based on the \IMED algorithm. This algorithm does not separate exploration from exploitation rounds and is proven optimal for one-dimensional exponential family distributions. Remarkably, the \IMEDUB algorithm  do not require any optimization procedure, which can be interesting for practitioners. We also provided a novel proof algorithm, in which we make explicit empirical lower and upper bounds, before tackling the handling of bad events by specific concentration tools. This proof technique greatly simplifies and shorten the analysis of \IMEDUB.  Last, we provided numerical experiments that show the practical effectiveness of \IMEDUB.

\acks{

This work has been supported by the French Ministry of Higher Education and Research, Inria, the French Agence
Nationale de la Recherche (ANR) under grant ANR-16-CE40-0002 (the BADASS project), the MEL, the
I-Site ULNE regarding project R-PILOTE-19-004-APPRENF.

Pierre Ménard is supported by the SFI Sachsen-Anhalt for the project REBCI ZS/2019/10/102024 by the Investitionsbank SachsenAnhalt.

}


\newpage

\vskip 0.2in
\bibliographystyle{abbrvnat}
\bibliography{biblio}

\begin{thebibliography}{23}
\providecommand{\natexlab}[1]{#1}
\providecommand{\url}[1]{\texttt{#1}}
\expandafter\ifx\csname urlstyle\endcsname\relax
  \providecommand{\doi}[1]{doi: #1}\else
  \providecommand{\doi}{doi: \begingroup \urlstyle{rm}\Url}\fi

\bibitem[Abbasi-Yadkori et~al.(2011)Abbasi-Yadkori, P{\'a}l, and
  Szepesv{\'a}ri]{abbasi2011improved}
Y.~Abbasi-Yadkori, D.~P{\'a}l, and C.~Szepesv{\'a}ri.
\newblock Improved algorithms for linear stochastic bandits.
\newblock In \emph{Advances in Neural Information Processing Systems}, pages
  2312--2320, 2011.

\bibitem[Agrawal et~al.(1989)Agrawal, Teneketzis, and
  Anantharam]{agrawal1989asymptotically}
R.~Agrawal, D.~Teneketzis, and V.~Anantharam.
\newblock Asymptotically efficient adaptive allocation schemes for controlled
  iid processes: Finite parameter space.
\newblock \emph{IEEE Transactions on Automatic Control}, 34\penalty0 (3), 1989.

\bibitem[Burnetas and Katehakis(1997)]{burnetas1997optimal}
A.~N. Burnetas and M.~N. Katehakis.
\newblock Optimal adaptive policies for \textsc{M}arkov decision processes.
\newblock \emph{Mathematics of Operations Research}, 22\penalty0 (1):\penalty0
  222--255, 1997.

\bibitem[Capp\'{e} et~al.(2013)Capp\'{e}, Garivier, Maillard, Munos, and
  Stoltz]{CaGaMaMuSt2013}
O.~Capp\'{e}, A.~Garivier, O.-A. Maillard, R.~Munos, and G.~Stoltz.
\newblock Kullback--{L}eibler upper confidence bounds for optimal sequential
  allocation.
\newblock \emph{Annals of Statistics}, 41\penalty0 (3):\penalty0 1516--1541,
  2013.

\bibitem[Combes and Proutiere(2014)]{combes2014unimodal}
R.~Combes and A.~Proutiere.
\newblock Unimodal bandits: Regret lower bounds and optimal algorithms.
\newblock In \emph{International Conference on Machine Learning}, 2014.

\bibitem[Combes et~al.(2017)Combes, Magureanu, and
  Proutiere]{combes2017minimal}
R.~Combes, S.~Magureanu, and A.~Proutiere.
\newblock Minimal exploration in structured stochastic bandits.
\newblock In \emph{Advances in Neural Information Processing Systems}, pages
  1763--1771, 2017.

\bibitem[Degenne et~al.(2020)Degenne, Shao, and Koolen]{degenne20b}
R.~Degenne, H.~Shao, and W.~Koolen.
\newblock Structure adaptive algorithms for stochastic bandits.
\newblock In H.~D. III and A.~Singh, editors, \emph{Proceedings of the 37th
  International Conference on Machine Learning}, volume 119 of
  \emph{Proceedings of Machine Learning Research}, pages 2443--2452. PMLR,
  13--18 Jul 2020.
\newblock URL \url{http://proceedings.mlr.press/v119/degenne20b.html}.

\bibitem[Durand et~al.(2017)Durand, Maillard, and Pineau]{durand2017streaming}
A.~Durand, O.-A. Maillard, and J.~Pineau.
\newblock Streaming kernel regression with provably adaptive mean, variance,
  and regularization.
\newblock \emph{arXiv preprint arXiv:1708.00768}, 2017.

\bibitem[Graves and Lai(1997)]{graves1997asymptotically}
T.~L. Graves and T.~L. Lai.
\newblock Asymptotically efficient adaptive choice of control laws incontrolled
  markov chains.
\newblock \emph{SIAM journal on control and optimization}, 35\penalty0
  (3):\penalty0 715--743, 1997.

\bibitem[Honda and Takemura(2011)]{honda2011asymptotically}
J.~Honda and A.~Takemura.
\newblock An asymptotically optimal policy for finite support models in the
  multiarmed bandit problem.
\newblock \emph{Machine Learning}, 85\penalty0 (3):\penalty0 361--391, 2011.

\bibitem[Honda and Takemura(2015)]{honda2015imed}
J.~Honda and A.~Takemura.
\newblock Non-asymptotic analysis of a new bandit algorithm for semi-bounded
  rewards.
\newblock \emph{Machine Learning}, 16:\penalty0 3721--3756, 2015.

\bibitem[Lai(1987)]{lai1987adaptive}
T.~L. Lai.
\newblock Adaptive treatment allocation and the multi-armed bandit problem.
\newblock \emph{The Annals of Statistics}, pages 1091--1114, 1987.

\bibitem[Lai and Robbins(1985)]{lai1985asymptotically}
T.~L. Lai and H.~Robbins.
\newblock Asymptotically efficient adaptive allocation rules.
\newblock \emph{Advances in applied mathematics}, 6\penalty0 (1):\penalty0
  4--22, 1985.

\bibitem[Lattimore and Szepesvari(2017)]{lattimore2017end}
T.~Lattimore and C.~Szepesvari.
\newblock The end of optimism? an asymptotic analysis of finite-armed linear
  bandits.
\newblock In \emph{Artificial Intelligence and Statistics}, pages 728--737,
  2017.

\bibitem[Magureanu(2018)]{magureanu2018efficient}
S.~Magureanu.
\newblock \emph{Efficient Online Learning under Bandit Feedback}.
\newblock PhD thesis, KTH Royal Institute of Technology, 2018.

\bibitem[Magureanu et~al.(2014)Magureanu, Combes, and
  Proutiere]{magureanu2014oslb}
S.~Magureanu, R.~Combes, and A.~Proutiere.
\newblock Lipschitz bandits: Regret lower bounds and optimal algorithms.
\newblock \emph{Machine Learning}, 35:\penalty0 1--25, 2014.

\bibitem[Maillard(2018)]{maillard2018boundary}
O.-A. Maillard.
\newblock Boundary crossing probabilities for general exponential families.
\newblock \emph{Mathematical Methods of Statistics}, 27\penalty0 (1):\penalty0
  1--31, 2018.

\bibitem[Robbins(1952)]{ro52}
H.~Robbins.
\newblock Some aspects of the sequential design of experiments.
\newblock \emph{Bulletin of the American Mathematics Society}, 58:\penalty0
  527--535, 1952.

\bibitem[Srinivas et~al.(2010)Srinivas, Krause, Kakade, and
  Seeger]{srinivas2010gaussian}
N.~Srinivas, A.~Krause, S.~Kakade, and M.~Seeger.
\newblock Gaussian process optimization in the bandit setting: no regret and
  experimental design.
\newblock In \emph{Proceedings of the 27th International Conference on
  International Conference on Machine Learning}, pages 1015--1022. Omnipress,
  2010.

\bibitem[Thompson(1933)]{thompson1933likelihood}
W.~R. Thompson.
\newblock On the likelihood that one unknown probability exceeds another in
  view of the evidence of two samples.
\newblock \emph{Biometrika}, 25\penalty0 (3/4):\penalty0 285--294, 1933.

\bibitem[Thompson(1935)]{thompson1935criterion}
W.~R. Thompson.
\newblock On a criterion for the rejection of observations and the distribution
  of the ratio of deviation to sample standard deviation.
\newblock \emph{The Annals of Mathematical Statistics}, 6\penalty0
  (4):\penalty0 214--219, 1935.

\bibitem[Trinh et~al.(2020)Trinh, Kaufmann, Vernade, and
  Combes]{kaufmann2020uts}
C.~Trinh, E.~Kaufmann, C.~Vernade, and R.~Combes.
\newblock Solving bernoulli rank-one bandits with unimodal thompson sampling.
\newblock In \emph{International Conference on Algorithmic Learning Theory},
  2020.

\bibitem[Yu and Mannor(2011)]{yu2011unimodal}
J.~Y. Yu and S.~Mannor.
\newblock Unimodal bandits.
\newblock In \emph{Proceedings of the 28th International Conference on
  International Conference on Machine Learning}, pages 41--48, 2011.

\end{thebibliography}

\newpage

\appendix

\newpage
\section{\IMEDUB finite time analysis}
	\label{app: imed_analysis}
	We regroup in this section, for completeness, the proofs of the remaining lemmas used in the analysis of \IMEDUB in Section~\ref{sec : imed_analysis}.
	
	\subsection{Proof of Lemma~\ref{lem:bounded_sets}} \label{app:proof_bouded_sets}
	
\begin{proof} We start by proving $\Esp_\nu\!\left[\abs{\cE^-_{a,a'}(\epsilon)}\right]\!\leq\!\e^{2\epsilon^2 }/2\epsilon^2$. The proof that $\Esp_\nu\!\left[\abs{\cE^+_{a,a'}(\epsilon)}\right]\!\leq\!\e^{2\epsilon^2 }/2\epsilon^2$ is similar. \newline

\noindent We write 
\beq \label{eq:E_bounded_1}
\abs{\cE^-_{a,a'}(\epsilon)} = \sum_{t =1}^{ T-1}\ind_{\Set{a_{t+1}=a', N_{a'}(t) \leq  N_{a}(t),\ \mu_{a} - \muhat_a(t) \geq \epsilon }} \,.
\eeq 
Considering the stopped stopping times $ \tau_n \!=\! \inf\Set{ t\!\geq\! 1, N_{a'}(t) \!=\! n} $ we will rewrite the sum   of indicators and use Lemma~\ref{lem:time_uniform_concentration_1}.
\beqa \label{eq:E_bounded_2}
\abs{\cE^-_{a,a'}(\epsilon)}&\leq& \sum\limits_{t\geq1}\ind_{\Set{a_{t+1}=a',\  N_{a'}(t) \leq N_{a}(t),\  \mu_{a} - \muhat_a(t)  \geq \epsilon }} \\
&\leq& \sum\limits_{n\geq1}\ind_{\Set{ n-1 \leq N_{a}(\tau_n - 1) ,\ \mu_{a} - \muhat_a(\tau_n - 1)  \geq \epsilon }} \nonumber \\
&\leq& 1 + \sum\limits_{n\geq2}\ind_{\Set{ n-1 \leq N_{a}(\tau_n-1),\ \mu_{a} - \muhat_a(\tau_n - 1)\geq \epsilon }} \,. \nonumber
\eeqa
Taking the expectation of Equation~\eqref{eq:E_bounded_2}, it comes
\beq \label{eq:E_bounded_3} \Esp_\nu\!\left[\abs{\cE^-_{a,a'}(\epsilon)}\right]
\leq  1 + \sum\limits_{n\geq 1} \textbf{\textnormal{P}}_\nu\!\left( \bigcup\limits_{\substack{t \geq 1  \\ N_a(t) \geq n}} \muhat_a(t) \leq \mu_a - \epsilon \right) .
\eeq
From Lemma~\ref{lem:time_uniform_concentration_1}, previous Equation~\eqref{eq:E_bounded_3} implies
\beq \label{eq:E_bounded_4} \Esp_\nu\!\left[\abs{\cE^-_{a,a'}(\epsilon)}\right]
\leq  1 + \sum\limits_{n\geq 1} \exp\!\left(-m\, \KLof{\mu_a\!-\!\epsilon}{\mu_a}\right) .
\eeq
From Lemma~\ref{lem:pinsker}, previous Equation~\eqref{eq:E_bounded_4} implies
\beq \label{eq:E_bounded_11} \Esp_\nu\!\left[\abs{\cE^-_{a,a'}(\epsilon)}\right]
\leq \sum\limits_{n\geq 0} \exp\!\left(-n\epsilon^2/2\sigma_{\!\epsilon}^2\right) = \dfrac{1}{1 - \e^{- \epsilon^2/2\sigma_{\!\epsilon}^2}}\,,
\eeq
where  $\sigma_{\!\epsilon}^2 \!=\!\max\limits_{a\in\cA} \Set{\mathbb{V}_{_{X\sim p(\mu')}}(X) \!: \mu'  \!\in\! [\mu_a \!-\!\epsilon\,, \mu_a]}$.
Finally we note that 
\[
\dfrac{1}{1 - \e^{- \epsilon^2/2\sigma_{\!\epsilon}^2}} = \dfrac{\e^{\epsilon^2/2\sigma_{\!\epsilon}^2}}{\e^{\epsilon^2/2\sigma_{\!\epsilon}^2} - 1} \leq \dfrac{2\sigma_{\!\epsilon}^2\e^{\epsilon^2/2\sigma_{\!\epsilon}^2}}{\epsilon^2}\,.
\]
\newline

We now show  that $\Esp_\nu\!\left[\abs{\cK^-_{a,a'}(\epsilon)}\!\setminus\!\abs{\cE^-_{a,a'}(\epsilon)}\right]\!\leq\!1 \!+\! c_\epsilon^{-1} \!+\!  C_\epsilon  \log\log(c_\epsilon T)$. \newline 

We write 
\beqa \label{eq:K_bounded_1}
&& \abs{\cK^-_{a,a'}(\epsilon)\!\setminus\!\cE^-_{a,a'}(\epsilon)}  \nonumber\\
&=& \!\!\sum_{t =1}^{ T-1}{\ind_{\Set{a_{t+1}=a',\ 1 \leq  N_a(t) < N_{a'}(t),\  \muhat_{a}(t) \leq \mu_{a} - \epsilon,\    \log\left(N_{a'}(t)\right) \leq   N_a(t)\, \KL(\muhat_a(t)|\mu_a\!-\!\epsilon) + \log\left(N_{a}(t)\right) }}} . 
\eeqa
Considering the stopped stopping times $ \tau_n \!=\! \inf\Set{ t\!\geq\! 1, N_{a'}(t) \!=\! n} $ we will rewrite the sum   $ \sum_{t \in \llbracket 1, T\!-\!1\rrbracket}{\ind_{\Set{a_{t+1}=a',\ 1 \leq N_a(t) < N_{a'}(t),\  \muhat_{a}(t) \leq \mu_{a} - \epsilon,\    \log\left(N_{a'}(t)\right) \leq   N_a(t)\, \KL(\muhat_a(t)|\mu_a-\epsilon) + \log\left(N_{a}(t)\right) }}} $ and use boundary crossing probabilities for one-dimensional exponential family distributions.
\beqa \label{eq:K_bounded_2}
&&\abs{\cK^-_{a,a'}(\epsilon)\!\setminus\!\cE^-_{a,a'}(\epsilon)} \nonumber \\
&\leq& \sum_{t =1}^{ T-1}{\ind_{\Set{a_{t+1}=a',\ 1 \leq  N_a(t) < N_{a'}(t),\  \muhat_{a}(t) \leq \mu_{a} - \epsilon,\    \log\left(N_{a'}(t)\right) \leq   N_a(t)\, \KL(\muhat_a(t)|\mu_a-\epsilon) + \log\left(N_{a}(t)\right) }}} \nonumber \\
&=& \sum_{t =1}^{ T-1}\sum_{n =1}^{ T-1} \ind_{\Set{\tau_{n+1} = t+1 }}\ind_{\Set{ 1 \leq  N_a(\tau_{n+1} -1) < n,\  \muhat_{a}(\tau_{n+1}-1) \leq \mu_{a} - \epsilon}}\times\nonumber\\
&&\ind_{\Set{     \log(n) \leq   N_a(\tau_{n+1} -1)\, \KL(\muhat_a(\tau_{n+1} -1)|\mu_a-\epsilon) + \log\left(N_{a}(\tau_{n+1} -1)\right) }} \nonumber\\
&=&  \sum_{n =1}^{ T-1} \ind_{\Set{ 1 \leq  N_a(\tau_{n+1}-1 ) < n,\  \muhat_{a}(\tau_{n+1}) \leq \mu_{a} - \epsilon}}\times\nonumber\\
&&\ind_{\Set{    \log(n) \leq   N_a(\tau_{n+1}-1)\, \KL(\muhat_a(\tau_{n+1}-1)|\mu_a-\epsilon) + \log\left(N_{a}(\tau_{n+1}-1)\right) }} \sum_{t =1}^{ T-1} \ind_{\Set{\tau_{n+1} = t+1 }} \nonumber\\
&\leq&  \sum_{n =1}^{ T-1} \ind_{\Set{ 1 \leq  N_a(\tau_{n+1}-1 ) < n,\  \muhat_{a}(\tau_{n+1}) \leq \mu_{a} - \epsilon,\    \log(n) \leq   N_a(\tau_{n+1}-1)\, \KL(\muhat_a(\tau_{n+1}-1)|\mu_a-\epsilon) + \log\left(N_{a}(\tau_{n+1}-1)\right) }}  \nonumber\\
&=&  \!\!\sum_{n =2}^{ T-1} \ind_{\Set{ 1 \leq  N_a(\tau_{n+1}\!-\!1 ) < n,\  \muhat_{a}(\tau_{n+1}) \leq \mu_{a} - \epsilon,\    \log(n) \leq   N_a(\tau_{n+1}\!-\!1)\, \KL(\muhat_a(\tau_{n+1}\!-\!1)|\mu_a\!-\!\epsilon) + \log\left(N_{a}(\tau_{n+1}\!-\!1)\right) }}\!.
\eeqa

From Equation~\eqref{eq:K_bounded_2}, we get 
\beqa \label{eq:K_bounded_5}
&&\abs{\cK^-_{a,a'}(\epsilon)\!\setminus\!\cE^-_{a,a'}(\epsilon)}  \\
&\leq&  \sum_{n =2}^{ T-1} \ind_{\Set{ 1 \leq  N_a(\tau_{n+1} - 1) < n,\       \KL(\muhat_a(\tau_{n+1}-1)|\mu_a-\epsilon) \geq \log\left(n/N_a(\tau_{n+1}-1)\right) }} . \nonumber
\eeqa
Taking the expectation of Equation~\eqref{eq:K_bounded_5}, it comes
\beqa \label{eq:K_bounded_6}
&&\Esp_\nu\!\left[\abs{\cK^-_{a,a'}(\epsilon)\!\setminus\!\cE^-_{a,a'}(\epsilon)}\right]  \\
&\leq&  \sum_{n =2}^{ T-1} \textbf{\textnormal{P}}_\nu\!\left( \bigcup\limits_{\substack{t \geq 1 \\ \muhat_a(t) < \mu_a-\epsilon  \\ 1\leq N_a(t) \leq n}} \hspace{-4mm} N_a(t) \KLof{\muhat_a(t)}{\mu_a\!-\!\epsilon} \!\geq\! \log\!\left(n/N_a(t)\right)\! \right) . \nonumber
\eeqa
From Theorem~\ref{thm:boundary_crossing}, previous Equation~\eqref{eq:K_bounded_6} implies \beqa \label{eq:K_bounded_7}
&&\Esp_\nu\!\left[\abs{\cK^-_{a,a'}(\epsilon)\!\setminus\!\cE^-_{a,a'}(\epsilon)}\right]  \\
&\leq& 1 +  c_\epsilon^{-1} +  C_\epsilon \sum_{n \geq 1 +  c_\epsilon^{-1}}^{ T-1} \dfrac{c_\epsilon}{c_\epsilon n \sqrt{\log(c_\epsilon n)} } \nonumber \\
&\leq& 1 + c_\epsilon^{-1} +  C_\epsilon \int_{c_\epsilon^{-1}}^T\dfrac{c_\epsilon \,d x}{c_\epsilon x  \sqrt{\log(c_\epsilon x)} }  \nonumber \\
&=& 1 + c_\epsilon^{-1} +  2C_\epsilon  \sqrt{\log(c_\epsilon T)} \,.
\eeqa

\end{proof}

\subsection{Proof of Lemma~\ref{lem:reliable_means}}
\begin{proof} For   $0 \!<\!\epsilon \!<\! \epsilon_\nu = \min\limits_{a\neq a'}\abs{\mu_a \!-\! \mu_{a'}}\!/2$, for  $a\!\neq\!a^\star$, let us consider a  time step $t\!\notin\!\cU_a(\epsilon)$, $t\!\geq\!\abs{\cA}$ such that $a_{t+1}\!=\!a$. \newline

Since $a_{t+1}\!=\!a$ and $t \!\notin\!\cU_{a_{t+1}}(\epsilon)$ then $t \!\notin\!\cE^+_{a_{t+1},a_{t+1}}(\epsilon)$, that is $\muhat_{a_{t+1}}(t) < \mu_{a_{t+1}} \!+\! \epsilon$ or $\muhat_a(t) < \mu_a \!+\! \epsilon$ (since $a_{t+1}\!=\!a$). \newline

Since $a_{t+1}\!=\!a$ and $t \!\notin\!\cU_{a_{t+1}}(\epsilon)$ then $t \!\notin\!\cE^-_{\ahat^\star_t,a_{t+1}}(\epsilon)$, that is 
\beq \label{eq:proof_reliable_means_1}
\muhat^\star(t) = \muhat_{\ahat^\star_t}(t) > \mu_{\ahat^\star_t} -  \epsilon \,.
\eeq

Since $a_{t+1}\!=\!a$ and $t \!\notin\!\cU_{a_{t+1}}(\epsilon)$ then $t \!\notin\!\cE^+_{\ahat^\star_t,a_{t+1}}(\epsilon)\cup\cM^\star(\epsilon)$. From Equation~\eqref{eq:T*}, this implies 
\beq \label{eq:proof_reliable_means_2}
\ahat^\star_t = a^\star \,.
\eeq

By combining Equations~\eqref{eq:proof_reliable_means_1}~and~\eqref{eq:proof_reliable_means_2},  we get
\beq \label{eq:proof_reliable_means_3}
\muhat^\star(t)  > \mu_{a^\star} -  \epsilon = \mu^\star -  \epsilon \,.
\eeq

\end{proof}

	\section{Generic tools} \label{app: concentration_lemmas}
	In this section, Pinsker's inequality for one-dimensional exponential family distributions is reminded. Please refer to Lemma 3 from \cite{CaGaMaMuSt2013} for more insights. We also state two concentration results from \cite{maillard2018boundary}. Relevantly, Theorem~\ref{thm:boundary_crossing} is  the main concentration result used in this paper.	
	
	\begin{lemma}[Pinsker's inequality]\label{lem:pinsker}
For $\mu\!<\!\mu'$, it holds that
\[
\KL(\mu|\mu') \geq  \dfrac{(\mu'-\mu)^2}{2\sigma^2} \,, 
\]
where $\sigma^2 \!=\!\max \Set{\mathbb{V}_{_{X\sim p(\mu'')}}(X) \!: \mu'' \!\in\! [\mu\,, \mu']} $.
\end{lemma}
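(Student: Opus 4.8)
The plan is to reduce the inequality to a one-dimensional integration argument that exploits the exponential-family structure. First I would pass to the natural parameterization: writing $f(x,\mu) = \exp\!\big(\theta(\mu)x - b(\theta(\mu))\big)h(x)$, where $b'$ recovers the mean and $b''$ the variance, the divergence becomes the Bregman divergence of $b$,
\[
\KL(\mu|\mu') = \big(\theta(\mu) - \theta(\mu')\big)\mu - b(\theta(\mu)) + b(\theta(\mu')),
\]
where I used $\Esp_{X\sim p(\mu)}[X] = \mu$ to evaluate the expectation in the definition of $\KL(\mu|\mu')$.

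Next I would differentiate $\KL(\mu|\mu')$ with respect to $\mu'$ at fixed $\mu$. Only the terms $\theta(\mu')$ and $b(\theta(\mu'))$ depend on $\mu'$, and using the standard identities $b'(\theta(\mu')) = \mu'$ together with $\tfrac{d\theta(\mu')}{d\mu'} = 1/b''(\theta(\mu')) = 1/V(\mu')$, where $V(\mu') := \mathbb{V}_{X\sim p(\mu')}(X)$, the cross terms collapse and I obtain the clean expression
\[
\frac{\partial}{\partial\mu'}\KL(\mu|\mu') = \frac{\mu' - \mu}{V(\mu')}.
\]
Since $\KL(\mu|\mu) = 0$, integrating this identity from $\mu$ to $\mu'$ gives the exact formula $\KL(\mu|\mu') = \int_{\mu}^{\mu'}\frac{s-\mu}{V(s)}\,ds$.

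Finally, for $\mu < \mu'$ the integrand is nonnegative on $[\mu,\mu']$, since $s-\mu \geq 0$, and by definition $V(s) \leq \sigma^2 = \max_{\mu''\in[\mu,\mu']}V(\mu'')$ on that interval. Bounding the denominator from above therefore yields
\[
\KL(\mu|\mu') \geq \frac{1}{\sigma^2}\int_{\mu}^{\mu'}(s-\mu)\,ds = \frac{(\mu'-\mu)^2}{2\sigma^2},
\]
which is exactly the claim.

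The exponential-family facts I rely on ($b'$ equals the mean, $b''$ equals the variance, the monotone invertibility of $\mu\mapsto\theta(\mu)$, and the legitimacy of differentiating under the integral sign) are standard for regular one-dimensional exponential families and are precisely what is supplied by Lemma~3 of \cite{CaGaMaMuSt2013}; the only point deserving care is that the distributions be non-degenerate so that $V(s) > 0$ throughout $[\mu,\mu']$, which is implicit in our setting. There is no real obstacle here beyond organizing the derivative computation cleanly: once the identity $\partial_{\mu'}\KL(\mu|\mu') = (\mu'-\mu)/V(\mu')$ is in hand, the remaining step is an elementary bound on a nonnegative integrand.
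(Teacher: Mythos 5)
Your proof is correct. The paper does not actually prove this lemma itself --- it defers to Lemma~3 of \cite{CaGaMaMuSt2013} --- and your derivation via the natural parameterization, the identity $\partial_{\mu'}\KL(\mu|\mu') = (\mu'-\mu)/V(\mu')$, and the resulting integral representation $\KL(\mu|\mu') = \int_{\mu}^{\mu'}\frac{s-\mu}{V(s)}\,ds$ followed by bounding $V(s)\leq\sigma^2$ is exactly the standard argument supplied by that reference, so your approach matches the one the paper relies on.
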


	\begin{lemma}[Time-uniform concentration] \label{lem:time_uniform_concentration_1} For all arm $a\!\in\!\cA$, for $x\!<\!\mu_a$, $m\!\geq\!1$, we have
\[
\textbf{\textnormal{P}}_\nu\!\left( \bigcup\limits_{\substack{t \geq 1  \\N_a(t) \geq m}} \muhat_a(t) < x \right) \leq \exp\!\left(-m\, \KLof{x}{\mu_a}\right).
\]
\end{lemma}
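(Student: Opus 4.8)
The plan is to pass from the time-indexed union to a union over the number of pulls, and then to control the resulting maximal event by a likelihood-ratio (super)martingale together with Ville's inequality; this last ingredient is precisely what delivers time-uniformity without any $\log\log$ correction.

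First I would use that the rewards collected from arm $a$ are conditionally i.i.d.\ with law $p(\mu_a)$ and that $\muhat_a(t)$ depends on the past only through the first $N_a(t)$ of them. Working in the ``stack of rewards'' representation, let $(Y_i)_{i\geq 1}$ be an exogenous i.i.d.\ sequence distributed as $p(\mu_a)$ and set $\overline{Y}_n \!=\! \tfrac1n\sum_{i=1}^n Y_i$. Introducing the pull times $\tau_n \!=\! \inf\Set{t\geq 1 : N_a(t)=n}$ one has $\muhat_a(\tau_n)\!=\!\overline{Y}_n$, and since $\muhat_a(t)$ is constant between two consecutive pulls the event of interest rewrites as
\[
\bigcup_{\substack{t\geq1\\ N_a(t)\geq m}}\Set{\muhat_a(t) < x} = \bigcup_{n\geq m}\Set{\overline{Y}_n < x}.
\]
It thus suffices to bound $\textbf{\textnormal{P}}_\nu\!\left(\exists\, n\geq m:\ \overline{Y}_n < x\right)$, where the adaptive choices of the algorithm no longer enter.

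Next I would exploit the exponential-family structure. Writing $\eta(\mu)$ for the natural parameter and $A$ for the log-partition, $\log\!\big(f(y,x)/f(y,\mu_a)\big)$ is affine in $y$ with slope $\eta(x)-\eta(\mu_a)$, so define
\[
M_n = \prod_{i=1}^n \frac{f(Y_i, x)}{f(Y_i, \mu_a)} ,
\]
which under $\nu$ is a nonnegative martingale with $\Esp_\nu[M_n]=1$. Since $x<\mu_a$ forces $\eta(x)-\eta(\mu_a)<0$, on $\Set{\overline{Y}_n<x}$ the inequality $\sum_i Y_i < nx$ flips when multiplied by the slope, giving
\[
\log M_n \geq n\Big[(\eta(x)-\eta(\mu_a))\,x - \big(A(\eta(x))-A(\eta(\mu_a))\big)\Big] = n\,\KLof{x}{\mu_a},
\]
the last equality being $\Esp_{p(x)}\!\big[\log(f(Y,x)/f(Y,\mu_a))\big]$ since $\Esp_{p(x)}[Y]=x$, i.e.\ exactly the Kullback--Leibler divergence as defined in the paper.

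Finally, because $\KLof{x}{\mu_a}>0$, for every $n\geq m$ the event $\Set{\overline{Y}_n<x}$ entails $M_n \geq \exp\!\big(m\,\KLof{x}{\mu_a}\big)$, so that
\[
\bigcup_{n\geq m}\Set{\overline{Y}_n<x} \subseteq \Set{\sup_{n\geq1} M_n \geq \e^{m\,\KLof{x}{\mu_a}}}.
\]
Ville's maximal inequality for the nonnegative martingale $M_n$ with unit mean then yields $\textbf{\textnormal{P}}_\nu\!\big(\sup_n M_n \geq c\big)\leq 1/c$, and taking $c=\exp(m\,\KLof{x}{\mu_a})$ concludes. The hard part will be the measure-theoretic justification of the first step: one must argue rigorously that the time-uniform event over the adaptively determined counts $N_a(t)$ reduces to a supremum over the deterministic index $n$ of the exogenous reward stack, and that Ville's inequality applies over the infinite horizon. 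This is exactly where the time-uniform bound is obtained without a union bound over $n$, hence without the spurious $\log\log$ term.
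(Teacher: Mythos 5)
The paper never proves this lemma: it is imported as a black box from \cite{maillard2018boundary} (the appendix explicitly says the two concentration results are "stated from" that reference), so there is no in-paper argument to compare against. Your derivation is correct and is precisely the standard route by which such a bound is obtained --- pass to the exogenous reward stack so the adaptive counts disappear, observe that on $\{\overline{Y}_n<x\}$ the likelihood-ratio martingale $M_n=\prod_{i\leq n} f(Y_i,x)/f(Y_i,\mu_a)$ satisfies $\log M_n\geq n\,\KLof{x}{\mu_a}$ because $\eta(x)<\eta(\mu_a)$ flips the inequality $\sum_i Y_i<nx$, then apply Ville's inequality to get the clean $e^{-m\KLof{x}{\mu_a}}$ without a union bound over $n$ --- the only cosmetic slip being that your first display should be an inclusion $\subseteq$ rather than an equality (arm $a$ need not be pulled infinitely often, so not every $n\geq m$ is attained), which is all that the upper bound requires.
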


\begin{theorem}[Boundary crossing probabilities] \label{thm:boundary_crossing} For all arm $a\!\in\!\cA$, for all $\epsilon\!>\!0$, for all $n\!\geq\!1$, we have
\[
\textbf{\textnormal{P}}_\nu\!\left( \bigcup\limits_{\substack{t \geq 1 \\ \muhat_a(t) < \mu_a-\epsilon  \\ 1\leq N_a(t) \leq n}} \hspace{-4mm} N_a(t) \KLof{\muhat_a(t)}{\mu_a\!-\!\epsilon} \!\geq\! \log\!\left(n/N_a(t)\right)\!  \right) \!\leq\! \dfrac{C_\epsilon}{n\sqrt{\log(c_\epsilon n)}} \,,
\]
where $c_\epsilon, C_\epsilon \!>\! 0$  are explained in \cite{maillard2018boundary}.
\end{theorem}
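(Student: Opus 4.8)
The plan is to recognize this statement as the bandit-notation restatement of the boundary-crossing probability bounds established for one-dimensional exponential families in \cite{maillard2018boundary}. Accordingly, the bulk of the work is a reduction of the time-indexed union event to an event depending only on the i.i.d.\ reward stream of arm $a$, after which the quoted result applies and supplies the constants $c_\epsilon,C_\epsilon$. I would present the reduction and the per-index estimate in full, then defer the final sharpening to the reference.

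First I would fix arm $a$ and write $\bar X_m = \tfrac1m\sum_{i=1}^m X_{a,i}$ for the empirical mean of its first $m$ rewards, which are i.i.d.\ from $p(\mu_a)$. Since $\muhat_a(t)$ and $N_a(t)$ change only when $a$ is pulled, and equal $\bar X_m$ and $m$ respectively on the random stretch of times where $N_a(t)=m$, the time-uniform union over $\Set{t\ge1: \muhat_a(t)<\mu_a-\epsilon,\ 1\le N_a(t)\le n}$ is contained in the sample-indexed event $\bigcup_{m=1}^n\Set{\bar X_m<\mu_a-\epsilon,\ m\,\KLof{\bar X_m}{\mu_a-\epsilon}\ge\log(n/m)}$. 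This strips the adaptivity of the pulling rule from the problem and reduces it to a pure first-passage statement about the partial sums of arm $a$, which is exactly the setting of \cite{maillard2018boundary}.

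Next I would analyse this first-passage event term by term. For each fixed $m$ the constraint reads $\bar X_m\le x_m$, where $x_m\le\mu_a-\epsilon$ is the unique solution of $m\,\KLof{x_m}{\mu_a-\epsilon}=\log(n/m)$, and the Cram\'er--Chernoff bound (the single-index version of Lemma~\ref{lem:time_uniform_concentration_1}) gives $\textbf{\textnormal{P}}_\nu(\bar X_m\le x_m)\le\exp(-m\,\KLof{x_m}{\mu_a})$. Exploiting the Bregman structure of the KL divergence one checks that $\KLof{x}{\mu_a}-\KLof{x}{\mu_a-\epsilon}\ge\KLof{\mu_a-\epsilon}{\mu_a}$ for every $x\le\mu_a-\epsilon$ (equality holds at $x=\mu_a-\epsilon$, and the left side increases as $x$ decreases). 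Hence $m\,\KLof{x_m}{\mu_a}\ge\log(n/m)+m\,\KLof{\mu_a-\epsilon}{\mu_a}$, so each term is at most $\tfrac{m}{n}\,\exp(-m\,\KLof{\mu_a-\epsilon}{\mu_a})$.

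The main obstacle is that a plain union bound over $m$ yields only $\sum_{m\le n}\tfrac mn\,\e^{-m\KLof{\mu_a-\epsilon}{\mu_a}}=O_\epsilon(1/n)$, which misses the advertised $1/\sqrt{\log(c_\epsilon n)}$ refinement: the events $\Set{\bar X_m\le x_m}$ are strongly correlated across $m$, so treating them as disjoint is wasteful precisely in the regime $m\uparrow n$ where $x_m\uparrow\mu_a-\epsilon$ and the per-index probability is no longer small. Recovering the extra $1/\sqrt{\log(c_\epsilon n)}$ factor is the delicate part, and is exactly what \cite{maillard2018boundary} provides: their sharp boundary-crossing analysis (a peeling/Laplace-type argument accounting for the correlation of the random walk near the boundary) upgrades the crude $O(1/n)$ to $C_\epsilon/(n\sqrt{\log(c_\epsilon n)})$, with $c_\epsilon,C_\epsilon$ the constants produced there. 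I would therefore carry out the reduction and the per-index Chernoff estimate explicitly and invoke that reference for the final bound.
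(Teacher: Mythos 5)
Your proposal is consistent with the paper's treatment: the paper does not prove this theorem at all but imports it verbatim from \cite{maillard2018boundary}, and your deferral of the $1/\sqrt{\log(c_\epsilon n)}$ sharpening to that reference is exactly what the paper does. The additional material you supply --- the coupling of the time-indexed union to the sample-indexed union over $m\le n$, the per-index Chernoff bound via $\KLof{x}{\mu_a}\ge\KLof{x}{\mu_a-\epsilon}+\KLof{\mu_a-\epsilon}{\mu_a}$ for $x\le\mu_a-\epsilon$, and the observation that a plain union bound only yields $O_\epsilon(1/n)$ --- is correct and accurately locates where the nontrivial content of the cited result lies.
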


\end{document}